\theoremstyle{definition}
\newtheorem{definition}{Definition}[section]
\newtheorem{theorem}{Theorem}[section]
\newtheorem{lemma}{Lemma}[section]
\newcommand{\RR}{\mathbb{R}}
\newcommand{\Exp}{\mathbb{E}}
\newcommand{\HH}{\mathcal{H}}
\newcommand{\xx}{\mathbf{x}}
\newcommand{\xxspace}{\mathcal{X}}
\newcommand{\yyspace}{\mathcal{Y}}
\newcommand{\zzspace}{\mathcal{Z}}
\newcommand{\aaspace}{\mathcal{A}}
\newcommand{\dist}{\mathcal{D}}
\newcommand{\err}{\mathrm{Err}}
\newcommand{\crossentropy}{\mathrm{CE}}
\newcommand{\emperr}{\widehat{\err}}
\newcommand{\sample}{\mathbf{S}}
\DeclareMathOperator*{\argmin}{arg\,min}
\newcommand{\defeq}{\vcentcolon=}
\newcommand{\ind}{\mathbb{I}}
\DeclarePairedDelimiterX{\inp}[2]{\langle}{\rangle}{#1, #2}
\newcommand{\djs}{d_{\text{JS}}}
\newcommand{\jsd}{D_{\text{JS}}}
\newcommand{\kl}{D_{\text{KL}}}
\newcommand{\util}{\textsc{Acc}}
\newcommand{\priv}{\textsc{Adv}}
\newcommand{\predA}{\widehat{A}}
\newcommand\independent{\protect\mathpalette{\protect\independenT}{\perp}}
\newcommand{\ie}{\textit{i}.\textit{e}.}
\newcommand{\eg}{\textit{e}.\textit{g}.}
\renewcommand{\paragraph}{%
  \@startsection{paragraph}{4}%
  {\z@}{0ex \@plus 1ex \@minus .2ex}{-1em}%
  {\normalfont\normalsize\bfseries}%
}
\def\independenT#1#2{\mathrel{\rlap{$#1#2$}\mkern2mu{#1#2}}}
\definecolor{dkgreen}{rgb}{0,0.6,0}
\definecolor{gray}{rgb}{0.5,0.5,0.5}
\definecolor{mauve}{rgb}{0.58,0,0.82}
\setlist{nolistsep}
\title{Trade-offs and Guarantees of Adversarial Representation Learning for Information Obfuscation}
\author{
  Han Zhao\thanks{The first two authors contributed equally to this work. Work done while HZ was at Carnegie Mellon University.}\\
  D. E. Shaw \& Co. \\
  \texttt{han.zhao@cs.cmu.edu} \\
  \And Jianfeng Chi\footnotemark[1] \\
  Department of Computer Science \\
  University of Virginia\\
  \texttt{jc6ub@virginia.edu} \\
   \And Yuan Tian \\
   Department of Computer Science \\
  University of Virginia\\
  \texttt{yuant@virginia.edu} \\
  \And Geoffrey J. Gordon\\
  Carnegie Mellon University \\
  Microsoft Research Montreal\\
  \texttt{geoff.gordon@microsoft.com}
}
\begin{document}

\maketitle

\begin{abstract}
Crowdsourced data used in machine learning services might carry sensitive information about attributes that users do not want to share. Various methods have been proposed to minimize the potential information leakage of sensitive attributes while maximizing the task accuracy. However, little is known about the theory behind these methods. In light of this gap, we develop a novel theoretical framework for attribute obfuscation. Under our framework, we propose a minimax optimization formulation to protect the given attribute and analyze its inference guarantees against worst-case adversaries. Meanwhile, it is clear that in general there is a tension between minimizing information leakage and maximizing task accuracy. To understand this, we prove an information-theoretic lower bound to precisely characterize the fundamental trade-off between accuracy and information leakage. We conduct experiments on two real-world datasets to corroborate the inference guarantees and validate this trade-off. Our results indicate that, among several alternatives, the adversarial learning approach achieves the best trade-off in terms of attribute obfuscation and accuracy maximization.
\end{abstract}

\section{Introduction}
\label{sec:intro}
With the growing demand for machine learning systems provided as services, a massive amount of data containing sensitive information, such as race, income level, age, etc., are generated and collected from local users. This poses a substantial challenge and it has become an imperative object of study in machine learning~\citep{gilad2016cryptonets}, computer vision~\citep{chou2018faster,wu2018towards}, healthcare~\citep{beaulieu2018privacy,beaulieu2018gnn}, speech recognition~\citep{srivastava2019privacy}, and many other domains. In this paper, we consider a practical scenario where the prediction vendor requests crowdsourced data for a target task, e.g, scientific modeling. The data owner agrees on the data usage for the target task while she does not want her other sensitive information (\eg, age, race) to be leaked. The goal in this context is then to obfuscate sensitive attributes of the sanitized data released by data owner from potential attribute inference attacks from a malicious adversary. For example, in an online advertising scenario, while the user (data owner) may agree to share her historical purchasing events, she also wants to protect her age information so that no malicious adversary can infer her age range from the shared data. Note that simply removing age attribute from the shared data is insufficient for this purpose, due to the redundant encoding in data, i.e., other attributes may have a high correlation with age. 

Under this scenario, a line of work~\citep{enev2012sensorsift, whitehill2012discriminately, hamm2017minimax, li2018towards,wang2017privacy, wu2018towards,osia2018deep,bertran2019adversarially,osia2020hybrid} aims to address the problem in the framework of (constrained) minimax problem. However, the theory behind these methods is little known. Such a gap between theory and practice calls for an important and appealing challenge: 
\begin{quoting}
\itshape
Can we prevent the information leakage of the sensitive attribute while still maximizing the task accuracy? Furthermore, what is the fundamental trade-off between attribute obfuscation and accuracy maximization in the minimax problem?
\end{quoting}

Under the setting of attribute obfuscation, the notion of information confidentiality should be attribute-specific: the goal is to protect specific attributes from being inferred by malicious adversaries as much as possible. Note that this is in sharp contrast with differential privacy (we systematically compare the related notions in Sec.~\ref{sec:related-work} Related Work), where mechanisms are usually designed to resist worst-case membership query among all the data owners instead of preventing information leakage of the sensitive attribute~\citep{dwork2014algorithmic}. From this perspective, our relaxed definition of attribute obfuscation against adversaries also allows for a more flexible design of algorithms with better accuracy.

\paragraph{Our Contributions} 
In this paper, we first formally define the notion of attribute inference attack in our setting and justify why our definitions are particularly suited under our setting. 
Through the lens of representation learning, we formulate the problem of accuracy maximization with information obfuscation constraint as a minimax optimization problem.
To provide a formal guarantee on attribute obfuscation, we prove an information-theoretic lower bound on the inference error of the protected attribute under attacks from arbitrary adversaries. 
To investigate the relationship between attribute obfuscation and accuracy maximization, we also prove a theorem that formally characterizes the inherent trade-off between these two concepts. 
We conduct experiments to corroborate our formal guarantees and validate the inherent trade-offs in different attribute obfuscation algorithms. From our empirical results, we conclude that the adversarial representation learning approach achieves the best trade-off in terms of attribute obfuscation and accuracy maximization, among various state-of-the-art attribute obfuscation algorithms.

\section{Preliminaries}

\paragraph{Problem Setup} We focus on the setting where the goal of the adversary is to perform \emph{attribute inference}. This setting is ubiquitous in sever-client paradigm where machine learning is provided as a service (MLaaS,~\citet{ribeiro2015mlaas}). Formally, there are two parties in the system, namely the prediction vendor and the data owner. We consider the practical scenarios where users agree to contribute their data for specific purposes (\eg, training a machine learning model) but do not want others to infer their sensitive attributes in the data, such as health information, race, gender, etc. The prediction vendor will not collect raw user data but processed user data and the target attribute for the target task. 
In our setting, we assume the adversary cannot get other auxiliary information than the processed user data. 
In this case, the adversary can be anyone who can get access to the processed user data to some extent and wants to infer other private information.
For example, malicious machine learning service providers are motivated to infer more information from users to do user profiling and targeted advertisements. The goal of the data owner is to provide as much information as possible to the prediction vendor to maximize the vendor's own accuracy, but under the constraint that the data owner should also protect the private information of the data source, \ie, \emph{attribute obfuscation}. For ease of discussion, in our following analysis, we assume the the prediction vendor performs binary classification on the processed data. Extensions to multi-class classification is straightforward.

\paragraph{Notation} We use $\xxspace$, $\yyspace$ and $\aaspace$ to denote the input, output and adversary's output space, respectively. Accordingly, we use $X, Y, A$ to denote the random variables which take values in $\xxspace, \yyspace$ and  $\aaspace$. We note that in our framework the input space $\xxspace$ may or may not contain the sensitive attribute $A$. For two random variables $X$ and $Y$, $I(X;Y)$ denotes the mutual information between $X$ and $Y$. We use $H(X)$ to mean the Shannon entropy of random variable $X$. Similarly, we use $H(X\mid Y)$ to denote the conditional entropy of $X$ given $Y$. We assume there is a joint distribution $\dist$ over $\xxspace\times\yyspace\times\aaspace$ from which the data are sampled. To make our notation consistent, we use $\dist_\xxspace$, $\dist_\yyspace$ and $\dist_\aaspace$ to denote the marginal distribution of $\dist$ over $\xxspace$, $\yyspace$ and $\aaspace$. Given a feature map function $f:\xxspace\to\zzspace$ that maps instances from the input space $\xxspace$ to feature space $\zzspace$, we define $\dist^f\defeq \dist\circ f^{-1}$ to be the induced (pushforward) distribution of $\dist$ under $f$, i.e., for any event $E'\subseteq\zzspace$, $\Pr_{\dist^f}(E') \defeq \Pr_\dist(\{x\in\xxspace\mid f(x)\in E'\})$.

To simplify the exposition, we mainly discuss the setting where $\xxspace\subseteq \RR^d, \yyspace = \aaspace = \{0, 1\}$, but the underlying theory and methodology could easily be extended to the categorical case as well. In what follows, we first formally define both the \emph{accuracy} of the prediction vendor for the individualized service and the \emph{attribute inference advantage} of an adversary. It is worth pointing out that our definition of inference advantage is \emph{attribute-specific}. In particular, we seek to keep the data useful while being robust to an adversary on protecting specific attribute information from attack. 

A \emph{hypothesis} is a function $h:\xxspace\to \yyspace$. The \emph{error} of a hypothesis $h$ under the distribution $\dist$ over $\xxspace\times\yyspace$ is defined as: $\err(h)\defeq \Exp_\dist\big[|Y - h(X)|\big]$. Similarly, we use $\emperr(h)$ to denote the empirical error of $h$ on a sample from $\dist$. For binary classification problem, when $h(\xx)\in\{0, 1\}$, the above loss also reduces to the error rate of classification. Let $\HH$ be the space of hypotheses. In the context of binary classification, we define the accuracy of a hypothesis $h\in\HH$ as:
\begin{definition}[Accuracy]
The accuracy of $h\in\HH$ is $\util(h)\defeq 1 - \Exp_\dist\big[|Y - h(X)|\big]$.
\end{definition}
For binary classification, we always have $0\leq \util(h)\leq 1,~\forall h\in\HH$. Similarly, an \emph{adversarial hypothesis} is a function of $h_A:\xxspace\to \aaspace$. Next we define a measure of how much advantage of attribute inference gained from a particular attack space in our framework:
\begin{definition}[Attribute Inference Advantage]
\label{def:AOA}
The inference advantage w.r.t.\ attribute $A$ under attacks from $\HH_A$ is defined as $\priv(\HH_A)\defeq\max_{h_A\in\HH_A} \big|\Pr_\dist(h_A(X) = 1\mid A = 1) - \Pr_\dist(h_A(X) = 1\mid A = 0)\big|$.

\end{definition}

Again, it is straightforward to verify that $0 \leq \priv(\HH_A) \leq 1$. Based on our definition, $\priv(\HH_A)$ then measures maximal inference advantage that the adversary in $\HH_A$ can gain. We can also refine the above definition to a particular hypothesis $h_A:\xxspace\to\{0, 1\}$ to measure its ability to steal information about $A$: $\priv(h_A) = \big|\Pr_\dist(h_A(X) = 1\mid A = 1) - \Pr_\dist(h_A(X) = 1\mid A = 0)\big|$.

\begin{restatable}{proposition}{privacy}
\label{prop:priv}
Let $h_A:\xxspace\to\{0, 1\}$ be a hypothesis, then $\priv(h_A) = 0$ iff $I(h_A(X); A) = 0$ and $\priv(h_A) = 1$ iff $h_A(X) = A$ almost surely or $h_A(X) = 1 - A$ almost surely.
\end{restatable}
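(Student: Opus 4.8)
The plan is to work directly with the two conditional probabilities $p_1 \defeq \Pr_\dist(h(X) = 1\mid A = 1)$ and $p_0 \defeq \Pr_\dist(h(X) = 1\mid A = 0)$, so that by definition $\priv_A(h) = 1 - |p_1 - p_0|$. Throughout I assume $0 < \Pr_\dist(A = 1) < 1$ so that these conditionals are well defined, and I abbreviate $\predA \defeq h(X)$, a $\{0,1\}$-valued random variable. Then the whole statement reduces to two elementary claims about the joint law of the pair $(\predA, A)$, which I would dispatch in turn.

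For the first equivalence, observe that $\priv_A(h) = 1 \iff |p_1 - p_0| = 0 \iff p_1 = p_0$. I would first note that, since $\predA$ is binary, $p_1 = p_0$ forces the conditional $\Pr_\dist(\predA = b\mid A = a)$ to be independent of $a$ for \emph{both} $b \in \{0,1\}$ (the $b=0$ case following from complementation); multiplying by $\Pr_\dist(A = a)$ and summing over $a$ then yields $\Pr_\dist(\predA = b, A = a) = \Pr_\dist(\predA = b)\Pr_\dist(A = a)$ for all $a, b$, i.e.\ $\predA \independent A$. Conversely, independence immediately gives $p_1 = \Pr_\dist(\predA = 1) = p_0$. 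Finally I invoke the standard fact that $I(\predA; A) = 0$ iff $\predA$ and $A$ are independent to conclude $\priv_A(h) = 1 \iff I(h(X); A) = 0$.

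For the second equivalence, $\priv_A(h) = 0 \iff |p_1 - p_0| = 1$, which, for two numbers in $[0,1]$, holds exactly when $(p_1, p_0) = (1, 0)$ or $(p_1, p_0) = (0, 1)$. In the first case $\Pr_\dist(h(X) \neq A) = (1 - p_1)\Pr_\dist(A = 1) + p_0\Pr_\dist(A = 0) = 0$, so $h(X) = A$ almost surely; the second case is symmetric and gives $h(X) = 1 - A$ almost surely. Conversely, if $h(X) = A$ a.s.\ then $p_1 = 1$ and $p_0 = 0$, while if $h(X) = 1 - A$ a.s.\ then $p_1 = 0$ and $p_0 = 1$; in either case $|p_1 - p_0| = 1$ and $\priv_A(h) = 0$.

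There is no serious obstacle here; the statement is essentially a bookkeeping exercise on a $2\times 2$ table. The only two points that warrant a word of care are the well-definedness of the conditional probabilities (handled by the support assumption on $A$) and the ``binary $\Rightarrow$ independence'' step in the first equivalence, where one must verify that equality of the single conditional $\Pr_\dist(\predA = 1\mid A = a)$ across $a$ actually propagates to the full joint distribution rather than just one cell.
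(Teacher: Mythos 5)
Your proposal is correct and takes essentially the same route as the paper: reduce everything to the two conditional probabilities $p_0, p_1$, observe that $\priv_A(h)=1$ is equivalent to $p_0=p_1$ which (for binary $h(X)$) is equivalent to independence and hence to $I(h(X);A)=0$, and handle $\priv_A(h)=0$ by the two cases $(p_1,p_0)\in\{(1,0),(0,1)\}$ together with the law-of-total-probability computation of $\Pr(h(X)\neq A)$ or $\Pr(h(X)\neq 1-A)$. You are marginally more careful than the paper in spelling out the ``binary $\Rightarrow$ independence'' step and in explicitly giving the converse direction of the second equivalence, but the underlying argument is the same.
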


Proposition~\ref{prop:priv} justifies Definition \ref{def:AOA} on how well an adversary $h_A$ can infer about $A$ from $X$: when $\priv(h_A) = 0$, it means that $h_A(X)$ contains no information about the sensitive attribute $A$. On the other hand, if $\priv(h_A) = 1$, then $h_A(X)$ fully predicts $A$ (or equivalently, $1-A$) from input $X$. In the latter case $h_A(X)$ also contains perfect information of $A$ in the sense that $I(h_A(X); A) = H(A)$, i.e., the Shannon entropy of $A$. It is worth pointing out that Definition \ref{def:AOA} is insensitive to the marginal distribution of $A$, and hence is more robust than other definitions such as the error rate of predicting $A$. In that case, if $A$ is extremely imbalanced, even a naive predictor can attain small prediction error by simply outputting constant. We call a hypothesis space $\HH_A$ \emph{symmetric} if $\forall h_A\in\HH_A$, $1 - h_A\in \HH_A$ as well. When $\HH_A$ is symmetric, we can also relate $\priv(\HH_A)$ to a binary classification problem:

\begin{restatable}{proposition}{errors}
\label{prop:error}
If $\HH_A$ is symmetric, then $\priv(\HH_A) + \min_{h_A\in\HH_A}\Pr(h_A(X) = 0\mid A = 1) + \Pr(h_A(X) = 1\mid A = 0) = 1$.
\end{restatable}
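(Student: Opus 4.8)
The plan is to rewrite both sides in terms of the two conditional acceptance probabilities $\alpha(h)\defeq\Pr_\dist(h(X)=1\mid A=1)$ and $\beta(h)\defeq\Pr_\dist(h(X)=1\mid A=0)$, and then exploit the symmetry of $\HH$ to replace the absolute value $|\alpha(h)-\beta(h)|$ appearing in $\priv_A$ by a bare signed difference. First I would observe that for any hypothesis $h$ the quantity on the right-hand side is $\Pr(h(X)=0\mid A=1)+\Pr(h(X)=1\mid A=0) = (1-\alpha(h))+\beta(h) = 1-(\alpha(h)-\beta(h))$, whereas $\priv_A(h)=1-|\alpha(h)-\beta(h)|$. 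Since $1-(\alpha(h)-\beta(h))\ge 1-|\alpha(h)-\beta(h)| = \priv_A(h) \ge \priv_A(\HH)$ for every $h\in\HH$, taking the minimum over $h\in\HH$ immediately yields one direction: $\min_{h\in\HH}\bigl(\Pr(h(X)=0\mid A=1)+\Pr(h(X)=1\mid A=0)\bigr)\ge\priv_A(\HH)$.

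For the reverse inequality I would pick $h^*\in\argmin_{h\in\HH}\priv_A(h)$, so that $\priv_A(\HH)=1-|\alpha(h^*)-\beta(h^*)|$, and split into two cases. If $\alpha(h^*)\ge\beta(h^*)$, then the right-hand-side term evaluated at $h^*$ is exactly $1-(\alpha(h^*)-\beta(h^*)) = 1-|\alpha(h^*)-\beta(h^*)| = \priv_A(\HH)$, so the minimum of the right-hand side is at most $\priv_A(\HH)$. If instead $\alpha(h^*)<\beta(h^*)$, I would invoke symmetry: $1-h^*\in\HH$, and from the definition one checks $\alpha(1-h^*)=1-\alpha(h^*)$ and $\beta(1-h^*)=1-\beta(h^*)$, hence $\alpha(1-h^*)-\beta(1-h^*)=\beta(h^*)-\alpha(h^*)=|\alpha(h^*)-\beta(h^*)|\ge 0$; evaluating the right-hand-side term at $1-h^*$ again gives $1-|\alpha(h^*)-\beta(h^*)|=\priv_A(\HH)$. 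Either way the minimum of the right-hand side is at most $\priv_A(\HH)$, which together with the first direction gives equality.

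The computations here are elementary, so there is no serious obstacle beyond bookkeeping. The one point requiring care — and the only place the hypothesis ``$\HH$ symmetric'' is actually used — is the second case of the reverse direction: without $1-h^*\in\HH$ one cannot guarantee a hypothesis in $\HH$ realizing the signed difference with the favorable sign, and the claimed identity can genuinely fail. I would therefore state explicitly where symmetry is invoked and double-check the two elementary identities $\alpha(1-h)=1-\alpha(h)$ and $\beta(1-h)=1-\beta(h)$ directly from the definitions of the conditional probabilities.
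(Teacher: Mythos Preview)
Your proposal is correct and follows essentially the same approach as the paper: both arguments reduce the identity to the observation that, under symmetry of $\HH$, $\max_{h\in\HH}|\alpha(h)-\beta(h)|=\max_{h\in\HH}(\alpha(h)-\beta(h))$, after which the claimed equality is an algebraic rewriting. The paper states this maximum identity in one line (swapping $h$ for $1-h$ whenever the signed difference is negative), while you unpack it into two inequalities and an explicit case split on the sign of $\alpha(h^*)-\beta(h^*)$; the content is the same.
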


Consider the confusion matrix between the actual sensitive attribute $A$ and its predicted variable $h_A(X)$. %
The false positive rate (eqv.\ Type-I error) is defined as FPR = FP / (FP + TN) and the false negative rate (eqv.\ Type-II error) is similarly defined as FNR = FN / (FN + TP). Using the terminology of confusion matrix, it is then clear that $\Pr(h_A(X) = 0\mid A = 1) = \text{FNR}$ and $\Pr(h_A(X) = 1\mid A = 0) = \text{FPR}$. In other words, Proposition~\ref{prop:error} says that if $\HH_A$ is symmetric, then the larger the attribute inference advantage of $\HH_A$, the smaller the minimum sum of Type-I and Type-II error under attacks from $\HH_A$. 

\section{Main Results}
Given a set of samples $\sample = \{(\xx_i, y_i, a_i)\}_{i=1}^n$ drawn i.i.d.\ from the joint distribution $\dist$, how can the data owner keep the data useful while keeping the sensitive attribute $A$ obfuscated under potential attacks from malicious adversaries? Through the lens of representation learning, we seek to find a (non-linear) feature representation $f:\xxspace\to\zzspace$ from input space $\xxspace$ to feature space $\zzspace$ such that $f$ still preserves relevant information w.r.t.\ the target task of inferring $Y$ while hiding sensitive attribute $A$. Specifically, we can solve the following unconstrained regularized problem with $\lambda > 0$:
\begin{equation}
    \begin{aligned}
        \underset{{h\in\HH, f}}{\min}~\underset{h_A\in\HH_A}{\max}\quad\emperr(&h\circ f) - \lambda \big(\Pr_\sample(h_A(f(X)) = 0\mid A = 1) + \Pr_\sample(h_A(f(X)) = 1\mid A = 0)\big)
    \end{aligned}
\label{equ:opt}
\end{equation}
It is worth pointing out that the optimization formulation in~\eqref{equ:opt} admits an interesting game-theoretic interpretation, where two agents $f$ and $h_A$ play a game whose score is defined by the objective function in~\eqref{equ:opt}. Intuitively, $h_A$ seeks to minimize the sum of Type-I and Type-II error while $f$ plays against $h_A$ by learning transformation to removing information about the sensitive attribute $A$. Algorithmically, for the data owner to achieve the goal of hiding information about the sensitive attribute $A$ from malicious adversary, it suffices to learn a representation that is independent of $A$:
\begin{restatable}{proposition}{mover}
\label{prop:mover}
Let $f:\xxspace\to\zzspace$ be a deterministic function and $\HH_A\subseteq 2^\zzspace$ be a hypothesis class over $\zzspace$. For any joint distribution $\dist$ over $X, A, Y$, if $I(f(X); A) = 0$, then $\priv(\HH_A\circ f) = 0$.
\end{restatable}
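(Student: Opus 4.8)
The plan is to reduce the statement about the whole hypothesis class to a pointwise statement and then take a minimum. Concretely, I would show the (formally stronger) fact that for \emph{every} $h\in\HH$ the composed hypothesis $h\circ f:\xxspace\to\yyspace$ satisfies $\priv_A(h\circ f)=1$. Since by definition $\priv_A(\HH\circ f)=\min_{h\in\HH}\priv_A(h\circ f)$, and each term of this minimum equals $1$ while $\priv_A(\cdot)\le 1$ always holds, the desired conclusion $\priv_A(\HH\circ f)=1$ follows at once.

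To establish $\priv_A(h\circ f)=1$ for a fixed $h\in\HH$, note first that $h\circ f$ is itself a deterministic hypothesis with domain $\xxspace$, so Proposition~\ref{prop:priv} applies to it verbatim: it suffices to prove $I(h(f(X));A)=0$. This is exactly where the assumption $I(f(X);A)=0$ is used. Since $h(f(X))$ is a deterministic function of $f(X)$, we have the Markov chain $A\to f(X)\to h(f(X))$, and the data-processing inequality gives $0\le I(h(f(X));A)\le I(f(X);A)=0$, hence $I(h(f(X));A)=0$. (Equivalently, and without invoking the DPI: $I(f(X);A)=0$ means $f(X)\independent A$, and for any measurable $B\subseteq\yyspace$ the event $\{h(f(X))\in B\}=\{f(X)\in h^{-1}(B)\}$ is thus independent of $A$, so $h(f(X))\independent A$, i.e.\ $I(h(f(X));A)=0$.) Feeding this into the ``if'' direction of Proposition~\ref{prop:priv} yields $\priv_A(h\circ f)=1$.

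There is essentially no technical obstacle here; the only delicate points are bookkeeping. One must check that it is legitimate to apply Proposition~\ref{prop:priv} with the input random variable taken to be $X$ and the hypothesis taken to be the composition $h\circ f$ (it is, since a hypothesis is by definition any map $\xxspace\to\yyspace$), and that the minimum defining $\priv_A(\HH\circ f)$ ranges over exactly $\{h\circ f:h\in\HH\}$, so that the pointwise value $1$ transfers to the minimum. Both are immediate from the definitions of Section~2. I would also remark that the argument never touches $Y$, only the independence of $f(X)$ and $A$, which is consistent with the statement holding for an arbitrary joint distribution $\dist$ over $X,A,Y$.
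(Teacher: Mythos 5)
Your argument is correct and follows the paper's own proof essentially verbatim: both invoke the data-processing inequality along the Markov chain $A\to f(X)\to h(f(X))$ to get $I(h(f(X));A)=0$ for every $h\in\HH$, then apply Proposition~\ref{prop:priv} and take the minimum over $\HH$. The extra remarks you add (the DPI-free independence argument and the bookkeeping about the domain of $h\circ f$) are fine but not needed.
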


Note that in this sequential game, $f$ is the first-mover and $h_A$ is the second. Hence without explicit constraint $f$ possesses a first-mover advantage so that $f$ can dominate the game by simply mapping all the input $X$ to a constant or uniformly random noise\footnote{The extension of Proposition~\ref{prop:mover} to randomized function is staightforward as long as the randomness is independent of the sensitive attribute $A$.}. To avoid these degenerate cases, the first term in the objective function of~\eqref{equ:opt} acts as an incentive to encourage $f$ to preserve task-related information. But will this incentive compromise the information of $A$? As an extreme case if the target variable $Y$ and the sensitive attribute $A$ are perfectly correlated, then it should be clear that there is a trade-off in achieving accuracy and preventing information leakage of the attribute. In Sec.~\ref{sec:trade-off} we shall provide an information-theoretic bound to precisely characterize such trade-off.

\subsection{Formal Guarantees against Attribute Inference}
In the unconstrained minimax formulation \eqref{equ:opt}, the hyperparameter $\lambda$ measures the trade-off between accuracy and information obfuscation. On one hand, if $\lambda\to 0$, we barely care about the information obfuscation of $A$ and devote all the focus to maximize our accuracy. On the other extreme, if $\lambda\to\infty$, we are only interested in obfuscating the sensitive information. In what follows we analyze the true error that an optimal adversary has to incur in the limit when both the task classifier and the adversary have unlimited capacity, i.e., they can be any randomized functions from $\zzspace$ to $\{0, 1\}$. To study the true error, we hence use the population loss rather than the empirical loss in our objective function. Furthermore, since the binary classification error in~\eqref{equ:opt} is NP-hard to optimize even for hypothesis class of linear predictors, in practice we consider the cross-entropy loss function as a convex surrogate loss. With a slight abuse of notation, the cross-entropy loss $\crossentropy_Y(h)$ of a probabilistic hypothesis $h:\xxspace\to[0, 1]$ w.r.t.\ $Y$ on a distribution $\dist$ is defined as follows:
\begin{equation}
    \nonumber
    \begin{aligned}
        \crossentropy_Y(h)\defeq -\Exp_\dist [&\ind(Y = 0)\log(1-h(X)) + \ind(Y=1)\log(h(X))]. \\
    \end{aligned}
\end{equation}
We also use $\crossentropy_A(h_A)$ to mean the cross-entropy loss of the adversary $h_A$ w.r.t.\ $A$. Using the same notation, the optimization formulation with cross-entropy loss becomes:
\begin{equation}
        \underset{{h\in\HH, f}}{\min}~\underset{h_A\in\HH_A}{\max}\quad\crossentropy_Y(h\circ f) - \lambda\cdot\crossentropy_A(h_A\circ f)
\label{equ:entropy}
\end{equation}
Given a feature map $f:\xxspace\to\zzspace$, assume that $\HH$ contains all the possible probabilistic classifiers from the feature space $\zzspace$ to $[0, 1]$. For example, a probabilistic classifier can be constructed by first defining a function $h:\zzspace\to[0, 1]$ followed by a random coin flipping to determine the output label, where the probability of the coin being 1 is given by $h(Z)$. Under such assumptions, the following lemma shows that the optimal target classifier under $f$ is given by the conditional distribution $h^*(Z)\defeq\Pr(Y = 1\mid Z)$.
\begin{restatable}{lemma}{limitexp}
For any feature map $f:\xxspace\to\zzspace$, assume that $\HH$ contains all the probabilistic classifiers, then $\min_{h\in\HH}\crossentropy_Y(h\circ f) = H(Y\mid Z)$ and $h^*(Z)\defeq \argmin_{h\in\HH}\crossentropy_Y(h\circ f) = \Pr(Y = 1\mid Z = f(X))$.
\end{restatable}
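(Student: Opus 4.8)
The plan is to condition on the learned representation $Z = f(X)$ and reduce the minimization to a pointwise (in $z$) one-variable problem. Writing $\eta(z)\defeq\Pr(Y = 1\mid Z = z)$ and applying the tower rule, for every $h\in\HH$ we have
\begin{equation}
\crossentropy_Y(h\circ f) = \Exp_Z\big[-(1-\eta(Z))\log(1-h(Z)) - \eta(Z)\log h(Z)\big],
\end{equation}
with the convention $0\log 0 = 0$. Since $\HH$ contains \emph{all} randomized binary classifiers, the value $h(z)\in[0,1]$ may be selected independently for (almost) every $z$, so minimizing $\crossentropy_Y(h\circ f)$ over $h$ amounts to minimizing the integrand for each fixed $z$.

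For fixed $z$, put $p\defeq h(z)$ and $\eta\defeq\eta(z)$ and consider $g(p)\defeq -(1-\eta)\log(1-p) - \eta\log p$ on $[0,1]$. A short computation gives $g'(p) = \frac{1-\eta}{1-p} - \frac{\eta}{p}$ and $g''(p) = \frac{1-\eta}{(1-p)^2} + \frac{\eta}{p^2}\geq 0$, so $g$ is convex and its unique minimizer solves $(1-\eta)p = \eta(1-p)$, i.e.\ $p = \eta$ (when $\eta\in\{0,1\}$ the minimizer is the corresponding endpoint, still consistent with $p = \eta$). The minimal value is $g(\eta) = -(1-\eta)\log(1-\eta) - \eta\log\eta$, which is precisely the entropy of the conditional law of $Y$ given $Z = z$. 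Substituting $h^*(z) = \eta(z) = \Pr(Y=1\mid Z=z)$ and integrating over $Z$ then yields
\begin{equation}
\min_{h\in\HH}\crossentropy_Y(h\circ f) = \Exp_Z\big[-(1-\eta(Z))\log(1-\eta(Z)) - \eta(Z)\log\eta(Z)\big] = H(Y\mid Z),
\end{equation}
by definition of conditional entropy, which establishes both claims at once.

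The points I expect to require the most care are: (i) justifying that pointwise minimization of the integrand actually attains $\min_{h\in\HH}\crossentropy_Y(h\circ f)$ — here the assumption that $\HH$ contains every randomized classifier is essential, together with a measurable-selection remark ensuring that $z\mapsto\eta(z)$, being a regular conditional probability, is measurable and hence defines a legitimate $h^*\in\HH$; and (ii) handling the boundary cases $\eta(z)\in\{0,1\}$ and the attendant $\log 0$ terms through the convention $0\log 0 = 0$, while noting that the integrand is nonnegative and bounded by $\log 2$, so all expectations are well defined. The remaining content is the routine one-variable convexity argument above.
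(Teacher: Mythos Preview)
Your proof is correct and follows essentially the same approach as the paper: both condition on $Z$ via the tower rule to reduce to a pointwise problem, then identify the minimizer as $\eta(z)=\Pr(Y=1\mid Z=z)$. The only cosmetic difference is that the paper compresses your convexity/calculus step by writing $\crossentropy_Y(h\circ f)=\Exp_Z[\kl(\Pr(Y\mid Z)\,\|\,h(Z))]+H(Y\mid Z)$ and invoking $\kl\geq 0$, which is equivalent to your argument.
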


By a symmetric argument, we can also see that the worst-case (optimal) adversary under $f$ is the conditional distribution $h^{*}_A(Z) \defeq \Pr(A = 1\mid Z)$ and $\min_{h_A\in\HH_A}\crossentropy_A(h_A\circ f) = H(A\mid Z)$. Hence we can further simplify the optimization formulation \eqref{equ:entropy} to the following form where the only optimization variable is the feature map $f$:
\begin{equation}
            \underset{f}{\min}\quad H(Y\mid Z = f(X)) - \lambda H(A\mid Z = f(X))
    \label{equ:feature}
\end{equation}
Since $Z = f(X)$ is a deterministic feature map, it follows from the basic properties of Shannon entropy that
\begin{equation}
    \nonumber
    \begin{aligned}
    H(Y\mid X) \leq H(Y\mid Z = f(X)) \leq H(Y), \quad\quad H(A\mid X) \leq H(A\mid Z = f(X)) \leq H(A)
    \end{aligned}
\end{equation}
which means that $H(Y\mid X) - \lambda H(A)$ is a lower bound of the optimum of the objective function in \eqref{equ:feature}. However, such lower bound is not necessarily achievable. To see this, consider the simple case where $Y = A$ almost surely. In this case there exists no deterministic feature map $Z = f(X)$ that is both a sufficient statistics of $X$ w.r.t.\ $Y$ while simultaneously filters out all the information w.r.t.\ $A$ except in the degenerate case where $A (Y)$ is constant. Next, to show that solving the optimization problem in \eqref{equ:feature} helps to remove sensitive information, the following theorem gives a bound of attribute inference in terms of the error that has to be incurred by the optimal adversary:

\begin{restatable}{theorem}{guarantee}
Let $f^*$ be the optimal feature map of \eqref{equ:feature} and define $H^*\defeq H(A\mid Z = f^*(X))$. Then for any adversary $\predA$ such that $I(\predA;A\mid Z) = 0$, $\Pr_{\dist^{f^*}}(\widehat{A}\neq A) \geq H^*/ 2\lg(6/H^*)$. 
\label{thm:lower}
\end{restatable}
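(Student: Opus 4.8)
The plan is to combine Fano's inequality with an elementary inversion of the binary entropy function. Throughout, write $Z = f^*(X)$ and $P_e \defeq \Pr_{\dist^{f^*}}(\predA \neq A)$, and let $H_b(p)\defeq -p\lg p-(1-p)\lg(1-p)$ denote the binary entropy function.

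First I would use the hypothesis $\predA\perp A\mid Z$. Since conditioning reduces entropy, $H(A\mid\predA)\ge H(A\mid Z,\predA)=H(A\mid Z)=H^*$, where the middle equality is exactly the assumed conditional independence; equivalently, $A\to Z\to\predA$ is a Markov chain and the data-processing inequality gives $I(A;\predA)\le I(A;Z)$, which rearranges to the same bound. Next, because $A$ is binary, Fano's inequality specializes to $H(A\mid\predA)\le H_b(P_e)$ (the usual $P_e\lg(\abs{\aaspace}-1)$ term vanishes as $\abs{\aaspace}=2$). Chaining the two inequalities yields $H_b(P_e)\ge H^*$. I would also record that $H^*=H(A\mid Z)\le H(A)\le 1$ and that $t\mapsto t/(2\lg(6/t))$ is increasing on $(0,1]$ (as $\lg(6/t)>0$ there makes the relevant derivative positive), so the right-hand side of the theorem never exceeds $1/(2\lg 6)<1/2$; hence the claimed bound is trivial when $P_e>1/2$, and otherwise we may work on the range where $H_b$ is increasing.

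The remaining and most delicate step is to turn $H_b(P_e)\ge H^*$ into the claimed explicit lower bound on $P_e$. Set $q\defeq H^*/(2\lg(6/H^*))$ (the case $H^*=0$ being trivial, take $H^*\in(0,1]$, so $q\in(0,1/2)$). Using the elementary inequality $(1-q)\lg\tfrac1{1-q}\le q\lg e$ — which is $\ln u\ge 1-1/u$ evaluated at $u=1-q$ — together with $e<6$, one gets $H_b(q)\le q\lg(e/q)\le q\lg(6/q)$. Substituting the definition of $q$ and simplifying, the inequality $q\lg(6/q)\le H^*$ reduces to the purely scalar claim $2\lg(6/H^*)\le 6/H^*$, which holds for all $H^*\in(0,1]$ because $t\mapsto t-2\lg t$ is nonnegative and increasing on $[6,\infty)$ while $6/H^*\ge 6$. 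Thus $H_b(q)\le H^*\le H_b(P_e)$, and since $H_b$ is increasing on $[0,1/2]$ with $q,P_e\le 1/2$, monotonicity gives $P_e\ge q=H^*/(2\lg(6/H^*))$, as desired.

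I expect this last step to be the main obstacle: the constant $6$ must be chosen large enough (roughly, at least $4$) so that the scalar inequality $2\lg(6/H^*)\le 6/H^*$ is valid over the \emph{entire} admissible range $H^*\in(0,1]$ — a smaller constant such as $e$ fails near $H^*=1$ — whereas the conditional-independence step, Fano's inequality, and the Markov/data-processing argument are routine information-theoretic manipulations.
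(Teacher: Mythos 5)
Your proof is correct and follows the same route the paper takes: Fano's inequality gives $H_b(P_e)\ge H(A\mid\predA)$, conditional independence plus the data-processing inequality gives $H(A\mid\predA)\ge H(A\mid Z)=H^*$, and then one inverts the binary entropy. The only difference is that the paper offloads the inversion to a cited result (Calabro 2009, Theorem 2.2: $H_2^{-1}(s)\ge s/2\lg(6/s)$), whereas you derive that scalar inequality yourself via $H_b(q)\le q\lg(e/q)\le q\lg(6/q)$ and the reduction to $2\lg(6/H^*)\le 6/H^*$; this makes your argument self-contained but does not change the structure of the proof.
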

\paragraph{Remark} Theorem~\ref{thm:lower} shows that whenever the conditional entropy $H^* = H(A\mid Z = f^*(X))$ is large, then the inference error of the protected attribute incurred by any (randomized) adversary has to be at least $\Omega(H^* / \log(1/ H^*))$. The assumption $I(\predA; A \mid Z) = 0$ says that, given the processed feature $Z$, the adversary $\predA$ could not use any external information that depends on the true sensitive attribute $A$. As we have already shown above, the conditional entropy essentially corresponds to the second term in our objective function, whose optimal value could further be flexibly adjusted by tuning the trade-off parameter $\lambda$. As a final note, Theorem~\ref{thm:lower} also shows that representation learning helps to remove the information about $A$ since we always have $H(A\mid Z = f(X)) \geq H(A\mid X)$ for any deterministic feature map $f$ so that the lower bound of inference error by any adversary is larger after learning the representation $Z = f(X)$. 

\subsection{Inherent trade-off between Accuracy and Attribute Obfuscation}
\label{sec:trade-off}
In this section we shall provide an information-theoretic bound to quantitatively characterize the inherent trade-off between these accuracy maximization and attribute obfuscation, due to the discrepancy between the conditional distributions of the target variable given the sensitive attribute. Our result is algorithm-independent, hence it applies to a general setting where there is a need to preserve both terms. To the best of our knowledge, this is the first information-theoretic result to precisely quantify such trade-off. Due to space limit, we defer all the proofs to the appendix.

Before we proceed, we first define several information-theoretic concepts that will be used in our analysis. For two distributions $\dist$ and $\dist'$, the Jensen-Shannon (JS) divergence $\jsd(\dist, \dist')$ is: $\jsd(\dist, \dist') \defeq \frac{1}{2}\kl(\dist~\|~\dist_M) + \frac{1}{2}\kl(\dist'~\|~\dist_M)$,
where $\kl(\cdot~\|~\cdot)$ is the Kullback–Leibler (KL) divergence and $\dist_M\defeq (\dist + \dist') / 2$. The JS divergence can be viewed as a symmetrized and smoothed version of the KL divergence, 
However, unlike the KL divergence, the JS divergence is bounded: $0 \leq \jsd(\dist, \dist') \leq 1$. Additionally, from the JS divergence, we can define a distance metric between two distributions as well, known as the JS distance~\citep{endres2003new}: $\djs(\dist, \dist')\defeq\sqrt{\jsd(\dist, \dist')}$. 
With respect to the JS distance, for any feature space $\zzspace$ and any deterministic mapping $f:\xxspace\to\zzspace$, we can prove the following lemma via the celebrated data processing inequality:
\begin{restatable}{lemma}{dpi}
Let $\dist_0$ and $\dist_1$ be two distributions over $\xxspace$ and let $\dist^f_0$ and $\dist^f_1$ be the induced distributions of $\dist_0$ and $\dist_1$ over $\zzspace$ by function $f$, then $\djs(\dist^f_0, \dist^f_1) \leq \djs(\dist_0, \dist_1)$.
\label{lemma:jsd}
\end{restatable}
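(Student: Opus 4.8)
Since $\djs(\cdot,\cdot) = \sqrt{\jsd(\cdot,\cdot)}$ and the square root is monotone increasing, it suffices to prove the inequality for the JS divergence itself, i.e.\ $\jsd(\dist^f_0, \dist^f_1) \leq \jsd(\dist_0, \dist_1)$. The plan is to reduce this to the data processing inequality for the KL divergence. First I would record the elementary fact that pushforward commutes with mixtures: if $\dist_M \defeq (\dist_0 + \dist_1)/2$, then for any event $E'\subseteq\zzspace$ we have $\Pr_{\dist_M^f}(E') = \Pr_{\dist_M}(f^{-1}(E')) = \tfrac12\Pr_{\dist_0}(f^{-1}(E')) + \tfrac12\Pr_{\dist_1}(f^{-1}(E')) = \tfrac12\Pr_{\dist_0^f}(E') + \tfrac12\Pr_{\dist_1^f}(E')$, so $\dist_M^f = (\dist_0^f + \dist_1^f)/2$ is exactly the mixture appearing in the definition of $\jsd(\dist_0^f, \dist_1^f)$.

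Next, I would invoke the data processing inequality: for any two distributions $P, Q$ over $\xxspace$ and any measurable map $f$, $\kl(P^f \,\|\, Q^f) \leq \kl(P \,\|\, Q)$. Applying this with $(P,Q) = (\dist_0, \dist_M)$ and with $(P,Q) = (\dist_1, \dist_M)$ gives $\kl(\dist_0^f \,\|\, \dist_M^f) \leq \kl(\dist_0 \,\|\, \dist_M)$ and $\kl(\dist_1^f \,\|\, \dist_M^f) \leq \kl(\dist_1 \,\|\, \dist_M)$. Averaging these two inequalities and using the identity from the first step, $\dist_M^f = (\dist_0^f + \dist_1^f)/2$, yields
\begin{equation*}
\jsd(\dist_0^f, \dist_1^f) = \tfrac12\kl(\dist_0^f \,\|\, \dist_M^f) + \tfrac12\kl(\dist_1^f \,\|\, \dist_M^f) \leq \tfrac12\kl(\dist_0 \,\|\, \dist_M) + \tfrac12\kl(\dist_1 \,\|\, \dist_M) = \jsd(\dist_0, \dist_1).
\end{equation*}
Taking square roots of both sides gives $\djs(\dist_0^f, \dist_1^f) \leq \djs(\dist_0, \dist_1)$, as desired.

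An equivalent and perhaps slicker route avoids quoting DPI for KL directly and instead uses the variational/mutual-information characterization of JSD: let $T \sim \mathrm{Bernoulli}(1/2)$ and, conditioned on $T = i$, draw $X \sim \dist_i$; then $\jsd(\dist_0, \dist_1) = I(T; X)$. Since $Z = f(X)$ is a deterministic function of $X$, the Markov chain $T \to X \to Z$ holds, and the data processing inequality for mutual information gives $I(T; Z) \leq I(T; X)$, i.e.\ $\jsd(\dist_0^f, \dist_1^f) = I(T; Z) \leq I(T; X) = \jsd(\dist_0, \dist_1)$; again take square roots. I expect the only real subtlety to be a clean justification of whichever form of the data processing inequality I use (KL-divergence form versus mutual-information form) together with the mixture-commutes-with-pushforward bookkeeping; both are standard, so the argument should be short, but I would state the DPI precisely (including the fact that it holds for arbitrary, not necessarily invertible, $f$) to keep the proof self-contained.
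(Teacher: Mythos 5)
Your proposal is correct, and in fact you give two valid routes. Your primary argument (push the DPI for KL through the two terms of the JSD definition, after checking that pushforward commutes with taking the mixture $\dist_M$) is a genuinely different decomposition from the paper's. The paper instead takes exactly your second, ``slicker'' route: it introduces a fair-coin random variable $B$, explicitly derives the identity $\jsd(\dist_0,\dist_1) = I(B;X_B)$ from the definitions of mutual information and KL divergence, observes the Markov chain $B\rightarrow X_B\rightarrow Z_B$ (using, as you do, that $Z_B = f(X_B)$ because pushforward is linear in the distribution), and then invokes the DPI for mutual information to get $I(B;Z_B)\leq I(B;X_B)$. What your first route buys is that you never need the $I(B;X_B)$ identity at all, at the cost of applying the KL-form DPI twice and averaging; what the paper's route buys is a single application of the more familiar mutual-information DPI, at the cost of a short computation verifying $\jsd = I(B;X_B)$. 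Both arguments are sound, and both require the same small bookkeeping step you correctly flagged, namely that $\dist_M^f = (\dist_0^f + \dist_1^f)/2$ (equivalently, $Z_B$ is distributed as the mixture $\dist_B^f$), which the paper handles in one sentence via linearity.
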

Without loss of generality, any method aiming to predict the target variable $Y$ defines a Markov chain as $X \overset{f}{\longrightarrow} Z \overset{h}{\longrightarrow} \hat{Y}$, where $\hat{Y}$ is the predicted target variable given by hypothesis $h$ and $Z$ is the intermediate representation defined by the feature mapping $f$. Hence for any distribution $\dist_0 (\dist_1)$ of $X$, this Markov chain also induces a distribution $\dist^{h\circ f}_0 (\dist^{h\circ f}_1)$ of $\hat{Y}$ and a distribution $\dist^f_0 (\dist^f_1)$ of $Z$. Now let $\dist_0^Y (\dist_1^Y)$ be the underlying true conditional distribution of $Y$ given $A = 0 (A=1)$. Realize that the JS distance is a metric, the following chain of triangular inequalities holds: 
\begin{equation}
\nonumber
\begin{aligned}
    \djs(\dist_0^Y, \dist_1^Y) \leq \djs(\dist_0^Y, &\dist^{h\circ f}_0) + \djs(\dist^{h\circ f}_0, \dist^{h\circ f}_1) + \djs(\dist^{h\circ f}_1, \dist_1^Y).  
\end{aligned}
\end{equation}
Combining the above inequality with Lemma~\ref{lemma:jsd} to show $\djs(\dist^{h\circ f}_0, \dist^{h\circ f}_1) \leq \djs(\dist^f_0, \dist^f_1)$, we immediately have: 
\begin{align*}
\djs(\dist_0^Y, \dist_1^Y) \leq ~&\djs(\dist_0^Y, \dist^{h\circ f}_0) + \djs(\dist^f_0, \dist^f_1) + \djs(\dist^{h\circ f}_1, \dist_1^Y).   
\end{align*}
Intuitively, $\djs(\dist_0^Y, \dist^{h\circ f}_0)$ and $\djs(\dist_1^Y, \dist^{h\circ f}_1)$ measure the distance between the predicted and the true target distribution on $A = 0/1$ cases, respectively. Formally, let $\err_a(h\circ f)$ be the prediction error of function $h\circ f$ conditioned on $A = a$. With the help of Lemma~\ref{lemma:lin}, the following result establishes a relationship between $\djs(\dist_a^Y, \dist^{h\circ f}_a)$ and the accuracy of $h\circ f$:
\begin{restatable}{lemma}{linerror}
Let $\hat{Y} = h(f(X))\in\{0, 1\}$ be the predictor, then for $a\in\{0,1\}$, $\djs(\dist_a^Y, \dist^{h\circ f}_a)\leq \sqrt{\err_a(h\circ f)}$.
\label{lemma:relationship}
\end{restatable}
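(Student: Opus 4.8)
The key observation is that both $\dist_a^Y$ and $\dist_a^{h\circ f}$ are distributions over the two-point set $\{0,1\}$, i.e.\ Bernoulli distributions: write $p\defeq\Pr_\dist(Y=1\mid A=a)$ for the parameter of $\dist_a^Y$ and $q\defeq\Pr_\dist(h(f(X))=1\mid A=a)$ for the parameter of $\dist_a^{h\circ f}$. The plan is to bound the JS \emph{divergence} $\jsd(\dist_a^Y,\dist_a^{h\circ f})$ first and then take the square root to get the JS \emph{distance}. For the divergence, I would invoke Lin's Lemma (Lemma~\ref{lemma:lin}) to get $\jsd(\dist_a^Y,\dist_a^{h\circ f})\leq \frac12\,||\dist_a^Y-\dist_a^{h\circ f}||_1$, and then simply evaluate the $L_1$ distance between two Bernoullis: $||\dist_a^Y-\dist_a^{h\circ f}||_1=|p-q|+|(1-p)-(1-q)|=2|p-q|$. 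Combining, $\jsd(\dist_a^Y,\dist_a^{h\circ f})\leq |p-q|$.

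Next I would relate $|p-q|$ to the conditional error $\err_a(h\circ f)=\Pr_\dist\big(h(f(X))\neq Y\mid A=a\big)$. Conditioning everything on $A=a$ and decomposing by the value of $Y$, we have
\begin{equation*}
p-q=\Pr(Y=1\mid A=a)-\Pr(h\circ f=1\mid A=a)=\Pr(Y=1,h\circ f=0\mid A=a)-\Pr(Y=0,h\circ f=1\mid A=a),
\end{equation*}
so $|p-q|$ is at most the maximum of the two nonnegative terms on the right, and in particular at most their sum, which is exactly $\Pr(h\circ f\neq Y\mid A=a)=\err_a(h\circ f)$. Hence $\jsd(\dist_a^Y,\dist_a^{h\circ f})\leq |p-q|\leq \err_a(h\circ f)$, and taking square roots gives $\djs(\dist_a^Y,\dist_a^{h\circ f})=\sqrt{\jsd(\dist_a^Y,\dist_a^{h\circ f})}\leq\sqrt{\err_a(h\circ f)}$, as claimed.

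There is no serious obstacle here; the argument is a short chain of elementary bounds. The only points requiring a little care are (i) correctly identifying $\dist_a^Y$ and $\dist_a^{h\circ f}$ as Bernoulli distributions so that the $L_1$ computation is exactly $2|p-q|$ rather than something loosened, and (ii) the bookkeeping that expresses $p-q$ as a difference of the two ``disagreement'' probabilities conditioned on $A=a$, which is what lets us charge $|p-q|$ against $\err_a(h\circ f)$. Both steps work verbatim for $a\in\{0,1\}$, so the statement follows for both values of $a$ simultaneously.
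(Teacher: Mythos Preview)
Your proposal is correct and follows essentially the same route as the paper: both apply Lin's Lemma to bound $\jsd$ by $\tfrac12\|\cdot\|_1$, evaluate the $L_1$ distance between the two Bernoullis as $2|p-q|$, and then bound $|p-q|$ by the conditional error. The only cosmetic difference is that the paper writes the last step as $|\Exp[Y\mid A=a]-\Exp[h(f(X))\mid A=a]|\leq \Exp[|Y-h(f(X))|\mid A=a]$ (the triangle inequality for expectations), which is exactly your decomposition into the two disagreement events.
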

Combine Lemma~\ref{lemma:jsd} and Lemma~\ref{lemma:relationship}, we get the following key lemma that is the backbone for proving the main results in this section:
\begin{lemma}[Key lemma]
\label{lemma:key}
Let $\dist_0$, $\dist_1$ be two distributions over $\xxspace\times\yyspace$ conditioned on $A = 0$ and $A = 1$ respectively. Assume the Markov chain $X \overset{f}{\longrightarrow} Z \overset{h}{\longrightarrow} \hat{Y}$ holds, then $\forall h\in\HH$:
\begin{align}
    \nonumber
    \djs(\dist_0^Y, \dist_1^Y) \leq&~ \sqrt{\err_0(h\circ f)} + \djs(\dist^f_0, \dist^f_1) + \sqrt{\err_1(h\circ f)}.
\end{align}
\end{lemma}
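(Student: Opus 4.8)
The plan is simply to assemble the three ingredients that have already been laid out in the text: the triangle inequality for the JS distance $\djs$ (which is a genuine metric by~\citet{endres2003new}), the data-processing inequality for $\djs$ (Lemma~\ref{lemma:jsd}), and the utility bound relating $\djs(\dist_a^Y, \dist^{h\circ f}_a)$ to the conditional error (Lemma~\ref{lemma:relationship}).

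First I would fix an arbitrary $h\in\HH$ and record the Markov chain $X\overset{f}{\longrightarrow}Z\overset{h}{\longrightarrow}\hat{Y}$, so that for each $a\in\{0,1\}$ the conditional distribution $\dist^{h\circ f}_a$ of $\hat{Y}$ given $A=a$ is exactly the pushforward of $\dist^f_a$ (the conditional distribution of $Z$ given $A=a$) under the map $h$. Since $\djs$ obeys the triangle inequality, inserting the two intermediate distributions $\dist^{h\circ f}_0$ and $\dist^{h\circ f}_1$ gives
\begin{equation*}
\djs(\dist_0^Y, \dist_1^Y) \leq \djs(\dist_0^Y, \dist^{h\circ f}_0) + \djs(\dist^{h\circ f}_0, \dist^{h\circ f}_1) + \djs(\dist^{h\circ f}_1, \dist_1^Y).
\end{equation*}

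Next I would bound the middle term by applying Lemma~\ref{lemma:jsd} with $h$ in the role of the feature map and $\dist^f_0,\dist^f_1$ in the role of the two source distributions, which yields $\djs(\dist^{h\circ f}_0, \dist^{h\circ f}_1) \leq \djs(\dist^f_0, \dist^f_1)$. For the two outer terms I would invoke Lemma~\ref{lemma:relationship}, giving $\djs(\dist_a^Y, \dist^{h\circ f}_a) \leq \sqrt{\err_a(h\circ f)}$ for $a=0,1$. Substituting these three estimates into the triangle inequality produces exactly the claimed bound, and since $h\in\HH$ was arbitrary the statement holds for all $h$.

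There is essentially no deep obstacle here; the one point deserving care is the data-processing step. In the present statement $\hat{Y}=h(f(X))\in\{0,1\}$ is a deterministic classifier, so Lemma~\ref{lemma:jsd} applies verbatim once one checks that composition of pushforwards behaves as expected and that conditioning on $A$ commutes with applying $f$ and then $h$. If one wished to allow $h$ to be a randomized classifier, Lemma~\ref{lemma:jsd} as stated (deterministic $f$) would not directly apply, and one would instead appeal to the data-processing inequality for $f$-divergences through the stochastic kernel induced by $h$; everything else in the argument is bookkeeping.
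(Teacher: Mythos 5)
Your proposal follows exactly the paper's argument: triangle inequality for the JS distance through the intermediate distributions $\dist^{h\circ f}_0$ and $\dist^{h\circ f}_1$, then Lemma~\ref{lemma:jsd} (data processing) to compress the middle term to $\djs(\dist^f_0,\dist^f_1)$, then Lemma~\ref{lemma:relationship} to bound each outer term by $\sqrt{\err_a(h\circ f)}$. Your closing remark about deterministic versus randomized $h$ is a sensible caveat but adds nothing beyond what the paper already assumes.
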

We emphasize that for $a\in\{0, 1\}$, the term $\err_a(h\circ f)$ measures the conditional error of the predicted variable $\hat{Y}$ by the composite function $h\circ f$ over $\dist_a$. Similarly, we can define the \emph{conditional accuracy} for $a\in\{0, 1\}: \util_a(h\circ f)\defeq 1 - \err_a(h\circ f)$. The following main theorem then characterizes a fundamental trade-off between accuracy and attribute obfuscation:
\begin{restatable}{theorem}{upper}
\label{thm:upper}
Let $\HH_A\subseteq 2^\zzspace$ be the hypothesis space of all the classifiers from $\zzspace$ to $\{0, 1\}$. Assume the conditions in Lemma~\ref{lemma:key} hold, then $\forall h\in\HH$, $\util_0(h\circ f) + \util_1(h\circ f)  \leq 2 - \frac{1}{3}\jsd(\dist_0^Y, \dist_1^Y) + \priv(\HH_A\circ f)$.
\end{restatable}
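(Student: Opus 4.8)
The plan is to combine the Key Lemma (Lemma~\ref{lemma:key}) with two auxiliary facts: that the feature-space JS distance $\djs(\dist^f_0,\dist^f_1)$ is controlled by the privacy term $\priv_A(\HH\circ f)$, and that a sum of three square roots is controlled by thrice the sum of the radicands via Cauchy--Schwarz.

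First I would express the privacy term via a total-variation distance. Since $\HH$ contains all classifiers $\zzspace\to\{0,1\}$, both $\HH$ and $\HH\circ f$ are symmetric, so Proposition~\ref{prop:error} gives $\priv_A(\HH\circ f)=\min_{h'\in\HH}\big(\Pr(h'(f(X))=0\mid A=1)+\Pr(h'(f(X))=1\mid A=0)\big)$. Writing an arbitrary $h'$ as the indicator of a measurable set $S\subseteq\zzspace$, the bracketed quantity equals $1-\big(\Pr_{\dist^f_1}(S)-\Pr_{\dist^f_0}(S)\big)$, and minimizing over $S$ (the Bayes/likelihood-ratio test) yields $1-\|\dist^f_0-\dist^f_1\|_{\mathrm{TV}}=1-\tfrac12\|\dist^f_0-\dist^f_1\|_1$. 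Hence $\priv_A(\HH\circ f)\le 1-\tfrac12\|\dist^f_0-\dist^f_1\|_1$, and combining with Lin's Lemma (Lemma~\ref{lemma:lin}) applied to $\dist^f_0,\dist^f_1$ gives $\djs(\dist^f_0,\dist^f_1)^2=\jsd(\dist^f_0,\dist^f_1)\le\tfrac12\|\dist^f_0-\dist^f_1\|_1\le 1-\priv_A(\HH\circ f)$, i.e.\ $\djs(\dist^f_0,\dist^f_1)\le\sqrt{1-\priv_A(\HH\circ f)}$.

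Next I would substitute this into the Key Lemma. Using $\err_a(h\circ f)=1-\util_a(h\circ f)$ for $a\in\{0,1\}$, Lemma~\ref{lemma:key} gives, for every $h\in\HH$,
\[
\djs(\dist_0^Y,\dist_1^Y)\le\sqrt{1-\util_0(h\circ f)}+\sqrt{1-\priv_A(\HH\circ f)}+\sqrt{1-\util_1(h\circ f)}.
\]
Squaring both sides, applying Cauchy--Schwarz in the form $(x+y+z)^2\le 3(x^2+y^2+z^2)$ to the right-hand side, and recalling $\djs(\cdot,\cdot)^2=\jsd(\cdot,\cdot)$, I obtain
\[
\jsd(\dist_0^Y,\dist_1^Y)\le 3\big((1-\util_0(h\circ f))+(1-\util_1(h\circ f))+(1-\priv_A(\HH\circ f))\big)=3\big(3-\util_0(h\circ f)-\util_1(h\circ f)-\priv_A(\HH\circ f)\big),
\]
and rearranging yields exactly $\util_0(h\circ f)+\util_1(h\circ f)+\priv_A(\HH\circ f)\le 3-\tfrac13\jsd(\dist_0^Y,\dist_1^Y)$.

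All computations here are routine; the one step requiring care is the identification of the privacy term with the $L_1$ distance between the induced feature distributions, and checking that Lin's Lemma points in the useful direction --- in particular that the Bayes-optimal adversary is a function of $Z$ and hence lies in $\HH\circ f$, so that $\priv_A(\HH\circ f)\le 1-\tfrac12\|\dist^f_0-\dist^f_1\|_1$ holds with the right inequality sign. Everything else (the triangle inequalities and the data-processing step) is already packaged inside Lemma~\ref{lemma:key} via Lemma~\ref{lemma:jsd}, so beyond this the proof is just Cauchy--Schwarz and algebra.
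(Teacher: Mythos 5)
Your proposal is correct and mirrors the paper's own argument: both identify $1-\priv_A(\HH\circ f)$ with the total-variation distance $\tfrac12\|\dist^f_0-\dist^f_1\|_1$ using the richness of $\HH$, invoke Lin's Lemma (Lemma~\ref{lemma:lin}) to get $\jsd(\dist^f_0,\dist^f_1)\le 1-\priv_A(\HH\circ f)$, substitute into the Key Lemma, and finish with $(x+y+z)^2\le 3(x^2+y^2+z^2)$ and algebra. The only cosmetic difference is that you route the TV identification through Proposition~\ref{prop:error} (symmetry of $\HH$) while the paper works directly from the definition with the absolute value.
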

The upper bound given in Theorem~\ref{thm:upper} shows that when the marginal distribution of the target variable $Y$ differ between two cases $A = 0$ or $A = 1$, then it is impossible to perfectly maximize accuracy and prevent the sensitive attribute being inferred. Furthermore, the trade-off due to the difference in marginal distributions is precisely given by the JS divergence $\jsd(\dist_0^Y, \dist_1^Y)$. Next, if we would like to decrease the advantage of adversaries, $\priv(\HH_A\circ f)$, through learning proper feature transformation $f$, then the upper bound on the sum of conditional accuracy also becomes smaller, for any predictor $h$. Note that in Theorem~\ref{thm:upper} the upper bound holds for \emph{any} adversarial hypothesis $h_A$ in the richest hypothesis class $\HH_A$ that contains all the possible binary classifiers. Put it another way, if we would like to maximally obfuscate information w.r.t.\ sensitive attribute $A$, then we have to incur a large joint error:
\begin{restatable}{theorem}{errorlower}
\label{thm:joint}
Assume the conditions in Theorem~\ref{thm:upper} hold. If $\priv(\HH_A\circ f) \leq \jsd(\dist_0^Y, \dist_1^Y)$, then $\forall h\in\HH$, $\err_0(h\circ f) + \err_1(h\circ f) \geq \frac{1}{2}\big(\djs(\dist_0^Y, \dist_1^Y) - \sqrt{ \priv(\HH_A\circ f)}\big)^2$.
\end{restatable}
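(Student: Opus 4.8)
The plan is to start from the Key Lemma (Lemma~\ref{lemma:key}), which already controls $\djs(\dist_0^Y,\dist_1^Y)$ by $\sqrt{\err_0(h\circ f)}+\djs(\dist^f_0,\dist^f_1)+\sqrt{\err_1(h\circ f)}$, and to replace the middle term $\djs(\dist^f_0,\dist^f_1)$ by a quantity expressed purely through $\priv_A(\HH\circ f)$. Once that substitution is in place, the theorem follows by rearranging and squaring.

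The crucial link is between $\djs(\dist^f_0,\dist^f_1)$ and privacy. Since $\HH$ contains all binary classifiers from $\zzspace$ to $\{0,1\}$, it is in particular symmetric, so Proposition~\ref{prop:error} applies and gives $\priv_A(\HH\circ f)=\min_{h'\in\HH}\Pr(h'(f(X))=0\mid A=1)+\Pr(h'(f(X))=1\mid A=0)$. Writing a classifier in terms of its acceptance region $R\subseteq\zzspace$, this sum equals $1-(\dist^f_1(R)-\dist^f_0(R))$, whose minimum over $R$ is $1-\sup_R(\dist^f_1(R)-\dist^f_0(R))=1-\tfrac12\|\dist^f_0-\dist^f_1\|_1$, i.e.\ one minus the total variation distance between the pushforward distributions. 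Hence $\tfrac12\|\dist^f_0-\dist^f_1\|_1=1-\priv_A(\HH\circ f)$, and Lin's Lemma (Lemma~\ref{lemma:lin}) yields $\jsd(\dist^f_0,\dist^f_1)\le 1-\priv_A(\HH\circ f)$, so that $\djs(\dist^f_0,\dist^f_1)\le\sqrt{1-\priv_A(\HH\circ f)}$.

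Substituting this bound into the Key Lemma and rearranging gives $\sqrt{\err_0(h\circ f)}+\sqrt{\err_1(h\circ f)}\ge \djs(\dist_0^Y,\dist_1^Y)-\sqrt{1-\priv_A(\HH\circ f)}$. The hypothesis $\priv_A(\HH\circ f)\ge 1-\jsd(\dist_0^Y,\dist_1^Y)$ is exactly what guarantees the right-hand side is nonnegative (it is equivalent to $\sqrt{1-\priv_A(\HH\circ f)}\le\djs(\dist_0^Y,\dist_1^Y)$), so we may square both sides; then the elementary inequality $(\sqrt{a}+\sqrt{b})^2\le 2(a+b)$ with $a=\err_0(h\circ f)$, $b=\err_1(h\circ f)$ gives $2(\err_0(h\circ f)+\err_1(h\circ f))\ge(\djs(\dist_0^Y,\dist_1^Y)-\sqrt{1-\priv_A(\HH\circ f)})^2$, which is the claimed bound.

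The only step beyond bookkeeping is the identification of $1-\priv_A(\HH\circ f)$ with the total variation distance $\tfrac12\|\dist^f_0-\dist^f_1\|_1$; this is where the assumption that $\HH$ realizes all classifiers is used (so that Proposition~\ref{prop:error} applies and the optimal adversary is the Bayes-optimal one), tying the privacy measure to a familiar statistical distance. Everything else — invoking Lin's Lemma, rearranging the triangle-inequality chain from the Key Lemma, checking the nonnegativity needed to legitimize the squaring under precisely the stated hypothesis, and the final application of $(\sqrt{a}+\sqrt{b})^2\le 2(a+b)$ — is routine.
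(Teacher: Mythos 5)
Your proposal is correct and follows essentially the same route as the paper: invoke the Key Lemma, replace $\djs(\dist^f_0,\dist^f_1)$ by $\sqrt{1-\priv_A(\HH\circ f)}$ via the total-variation identity and Lin's Lemma (the paper establishes this inequality inside the proof of Theorem~\ref{thm:upper} and simply reuses it here), rearrange, check nonnegativity from the hypothesis, and finish with $(\sqrt{a}+\sqrt{b})^2\le 2(a+b)$ before squaring. If anything, you are slightly more explicit than the paper in re-deriving the link between $\priv_A(\HH\circ f)$ and $d_{\mathrm{TV}}(\dist^f_0,\dist^f_1)$, which the paper's proof of Theorem~\ref{thm:joint} takes for granted from the preceding argument.
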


\paragraph{Remark} The above lower bound characterizes a fundamental trade-off between information obfuscation of the sensitive attribute  and joint error of target task. In particular, up to a certain level $\jsd(\dist_0^Y, \dist_1^Y)$, the larger the inference advantage that the adversary can gain, the smaller the joint error. In light of Proposition~\ref{prop:mover}, this means that although the data-owner, or the first-mover $f$, could try to maximally filter out the sensitive information via constructing $f$ such that $f(X)$ is independent of $A$, such construction will also inevitably compromise the joint accuracy of the prediction vendor. It is also worth pointing out that our results in both Theorem~\ref{thm:upper} and Theorem~\ref{thm:joint} are attribute-independent in the sense that neither of the bounds depends on the marginal distribution of $A$. Instead, all the terms in our results only depend on the conditional distributions given $A = 0$ and $A = 1$. This is often more desirable than bounds involving mutual information, \eg, $I(A, Y)$, since $I(A, Y)$ is close to $0$ if the marginal distribution of $A$ is highly imbalanced.

\section{Experiments}
In this section, we conduct experiments to investigate the following questions:
\begin{enumerate}
    \item[\textbf{Q1}] Are our formal guarantees valid for different attribute obfuscation methods and the inherent trade-offs between attribute information obfuscation and accuracy maximization exist in all methods? 
    \item[\textbf{Q2}] Which attribute obfuscation algorithms achieve the best trade-offs in terms of attribute obfuscation and accuracy maximization?
\end{enumerate}

\subsection{Datasets and Setup}
In our experiments, we use: (1) Adult dataset~\citep{Dua:2019}: The Adult dataset is a benchmark dataset for classification. The task is to predict whether an individual's income is greater or less than 50K/year based on census data. 
In our experiment we set the target task as income prediction and the malicious task done by the adversary as inferring gender, age and education, respectively. (2) UTKFace dataset~\citep{zhang2017age}: The UTKFace dataset is a large-scale face benchmark dataset containing more than 20,000 images with annotations of age, gender, and ethnicity. 
In our experiment, we set our target task as gender classification and we use the age and ethnicity as the protected attributes. We refer readers to Sec.~\ref{sec:detailed} in the appendix for detailed descriptions about the data pre-processing pipeline and the data distribution for each dataset. 

We conduct experiments with the following methods to verify our theoretical results and provide a thorough practical comparison among these methods. 1). Privacy Partial Least Squares (PPLS)~\citep{enev2012sensorsift}, 2). Privacy Linear Discriminant Analysis (PLDA)~\citep{whitehill2012discriminately}, 3). Minimax filter with alternative update (ALT-UP)~\citep{hamm2017minimax}, 4) Maximum Entropy Adversarial Representation Learning (MAX-ENT)~\citep{roy2019mitigating} 5). Gradient Reversal Layer (GRL)~\citep{ganin2016domain} 6). Principal Component Analysis (PCA) 7). Normal Training (NORM-TRAIN), 8) Local Differential Privacy (LDP) with Laplacian mechanism, 9). Differentially Private SGD (DPSGD)~\citep{abadi2016deep}. Among the first seven methods, the first five are state-of-the-art minimax methods for protecting against attribute inference attacks while the latter two are normal representation learning baselines for comprehensive comparison. Although DP is not tailored to attribute obfuscation, we can still add two DP baselines to examine the accuracy and attribute obfuscation trade-off for comparison\footnote{ Some other methods~\citep{osia2018deep,Wang2018Not} in the literature are close variants of the above, so we do not include them here due to the space limit. }. 
To ensure the comparison is fair among different methods, we conduct a controlled experiment by using the same network structure as the baseline hypothesis among all the methods for each dataset. For each experiment on the Adult dataset and UTKFace dataset, we repeat the experiments for ten times to report both the average performance and their standard deviations. Sec.~\ref{sec:detailed} in the appendix provides detailed descriptions about the methods and the hyperparameter settings. 

Note that in practice due to the non-convex nature of optimizing deep neural nets, we cannot guarantee to find the global optimal conditional entropy $H^*$. Hence in order to compute the formal guarantee given by our lower bound in Theorem~\ref{thm:lower}, we use the cross-entropy loss of the optimal adversary found by our algorithm on inferring the sensitive attribute $A$. Furthermore, since our analysis only applies to representation learning based approaches, we do not have similar guarantee for DP-related methods in our context. We visualize the performances of the aforementioned algorithms on attribute obfuscation and accuracy maximization in Figure~\ref{fig:formal-guarantee} and Figure~\ref{fig:utility}, respectively. 

\subsection{Results and Analysis}

\paragraph{Validation of Our Theory (Q1)} From Figure~\ref{fig:formal-guarantee}, we can see that the formal guarantees are valid for all representation learning approaches. With the results in Figure~\ref{fig:utility}, we also see that no methods are perfect in both achieving both attribute obfuscation and accuracy maximization: the methods with small accuracy loss comes with relative low inference errors and vice versa.

\paragraph{Comparison with Different Methods (Q2)} Among all methods, LDP, PLDA, ALT-UP, MAX-ENT and GRL are effective in attribute obfuscation by forcing the optimal adversary to incur a large inference error in Figure~\ref{fig:formal-guarantee}. On the other hand, PCA and NORM-TRAIN are the least effective ones. This is expected as neither NORM-TRAIN nor PCA filters information in data about the sensitive attribute $A$. 

From Figure~\ref{fig:utility}, we can also see a sharp contrast between DP-based methods and other methods in terms of the joint conditional error on the target task: both LDP and DPSGD could incur significant accuracy loss compared with other methods. Combining this one with our previous observation from Figure~\ref{fig:formal-guarantee}, we can see that DP-based methods either make data private by adding large amount of noise to filter out both target-related information and sensitive-related information available in the data, or add insufficient amount of noise so that both target-related and sensitive-related information is well preserved. As a comparison, representation learning based approaches leads to a better trade-off.

Among the representation learning methods, PLDA, ALT-UP, MAX-ENT and GRL perform the best in attribute obfuscation. Compared to PLDA and GRL, ALT-UP and MAX-ENT incur significant drops in accuracy when $\lambda$ is large. It is also worth to note that different adversarial representation learning methods have different sensitivity on $\lambda$: a large $\lambda$ for MAX-ENT might lead to an unstable model training process and result in a large accuracy loss. In contrast, GRL is often more stable, which is consistent to the results shown in \citep{daskalakis2018limit}.  
\begin{figure*}[!t]
  \vspace*{-0.5em}
  \centering
    \includegraphics[width=0.95\linewidth]{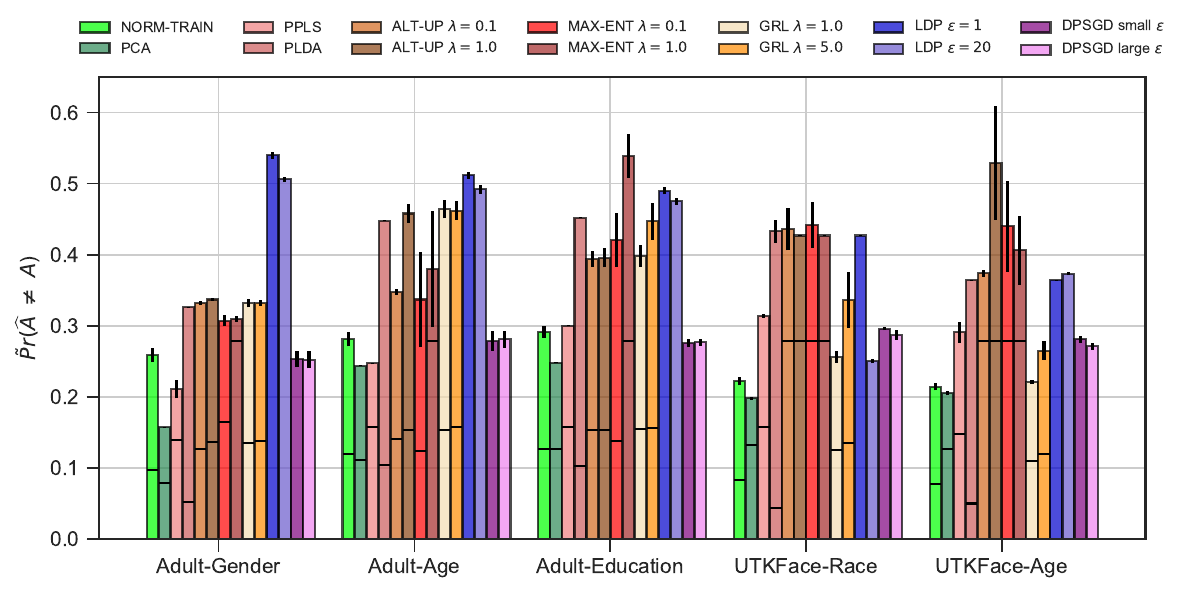}
  \caption{Performance on attribute obfuscation of different methods (the larger the better). The horizontal lines across the bars indicate the corresponding formal guarantees given by our lower bound in Theorem~\ref{thm:lower}.}
  \vspace*{-1.2em}
  \label{fig:formal-guarantee}
\end{figure*}
\begin{figure*}[!t]
  \centering
    \includegraphics[width=0.95\linewidth]{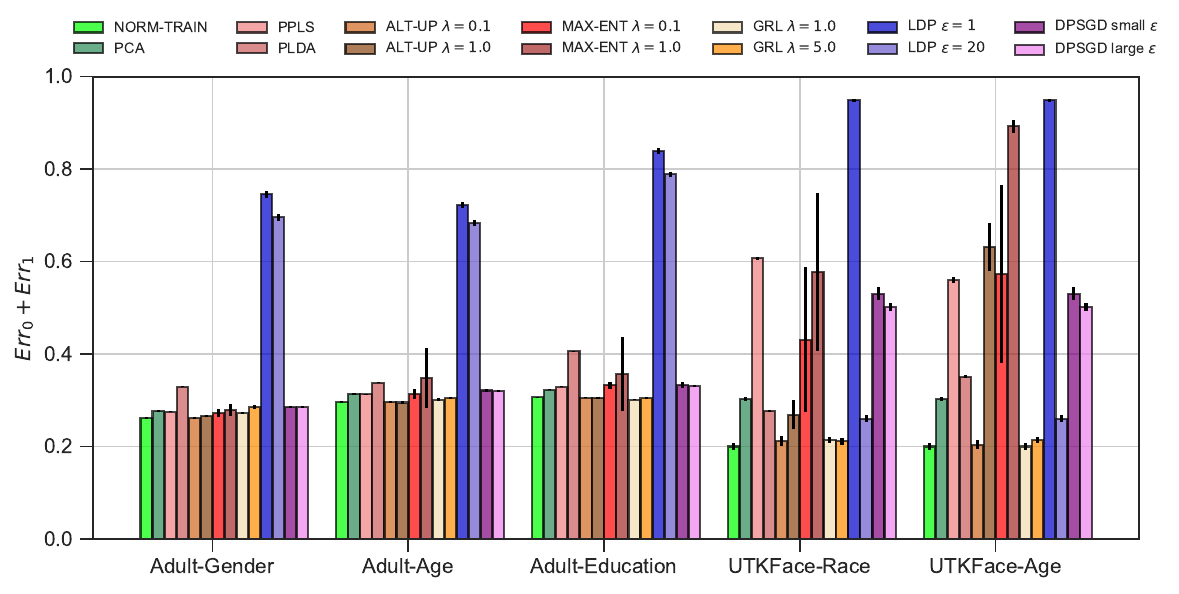}
  \caption{The joint conditional error ($\err_0 + \err_1$, the smaller the better) of different methods.}
  \label{fig:utility}
  \vspace*{-1.0em}
\end{figure*}

\section{Related Work}
\label{sec:related-work}

\paragraph{Attribute Obfuscation}

Various minimax formulations and algorithms have been proposed to defend against inference attack in different scenarios~\citep{enev2012sensorsift, whitehill2012discriminately, hamm2017minimax, li2018towards, wu2018towards,osia2018deep,feutry2018learning,bertran2019adversarially,osia2020hybrid}. 
\citet{bertran2019adversarially} proposed the optimization problem where the terms in the objective function are defined in terms of mutual information. Under their formulation, they analyze a trade-off between between utility loss and attribute obfuscation: under the constraint of the attribute obfuscation $I(A; Z)\leq k$, what the maximum utility loss $I(Y; X \mid Z)$ is. Compared with these works, we study the inherent trade-off between the accuracy and attribute obfuscation and provide formal guarantees to quantify worst-case inference error given the transformation. 

\paragraph{Differential Privacy} 

Differential privacy (DP) has been proposed and extensively investigated to protect the individual privacy of collected data~\citep{dwork2006calibrating} and DP mechanisms were used in the training of deep neural network recently~\citep{abadi2016deep,papernot2016semi}. DP ensures the output distribution of a randomized mechanism to be statistically indistinguishable between any two neighboring datasets, and provides formal guarantees for privacy problems such as defending against the membership query attacks~\citep{homer2008resolving,shokri2017membership}. 
From this perspective, DP is closely related to the well-known membership inference attack~\citep{shokri2017membership} instead. As a comparison, our goal of attribute obfuscation is to learn a representation such that the sensitive attributes cannot be accurately inferred. Although the two goals differ, ~\citet{yeom2018privacy} show the there are deep connections between membership inference and attribute inference. An interesting direction to explore is to draw more formal connections to these two notions. 
Last but not least, It is also worth to mention that the notion of individual fairness may be viewed as a generalization of DP~\citep{dwork2012fairness}.

\paragraph{Algorithmic Fairness} 

Recent work has shown that unfair models could lead to the leakage of users' sensitive information~\citep{yaghini2019disparate}. 
In particular, adversarial learning methods have been used as a tool in both fields to achieve the corresponding goals~\citep{edwards2015censoring,hamm2017minimax}. However, the motivations and goals significantly differ between these two fields. Specifically, the widely adopted notion of group fairness, namely equalized odds~\citep{hardt2016equality}, requires equalized false positive and false negative rates across different demographic subgroups. As a comparison, in applications where information leakage is a concern, we mainly want to ensure that adversaries cannot steal sensitive information from the data. Hence our goal is to give a worst case guarantee on the inference error that any adversary has at least to incur. To the best of our knowledge, our results in Theorem~\ref{thm:lower} is the first one to analyze the performance of attribute obfuscation in such scenarios. Furthermore, no prior theoretical results exist on discussing the trade-off between attribute obfuscation and accuracy under the setting of representation learning. Our proof techniques developed in this work could also be used to derive information-theoretic lower bounds in related problems as well~\citep{zhao2019learning,zhao2019inherent}. On a final note, the relationships of the above notions are visualized in Figure~\ref{fig:fairness_vs_privacy}.

\begin{figure}[!htb]
  \centering
    \includegraphics[width=0.8\linewidth]{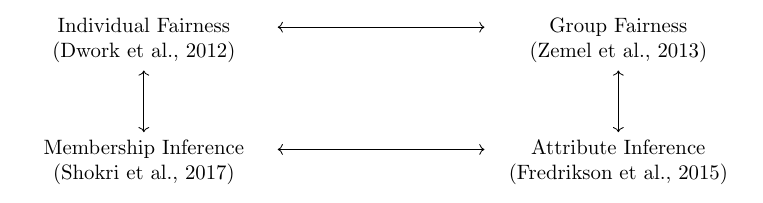}
  \caption{Relationships between different notions of fairness and inference attack.}
  \label{fig:fairness_vs_privacy}
\end{figure}

\section{Conclusion}
We develop a theoretical framework for analyzing attribute obfuscation through adversarial representation learning. Specifically, the framework suggests using adversarial learning techniques to obfuscate the sensitive attribute and we also analyze the formal guarantees of such techniques in the limit of worst-case adversaries. We also prove an information-theoretic lower bound to quantify the inherent trade-off between accuracy and obfuscation of attribute information. Following our formulation, we conduct experiments to corroborate our theoretical results and to empirically compare different state-of-the-art attribute obfuscation algorithms. Experimental results show that the adversarial representation learning approaches are effective against attribute inference attacks and often achieve the best trade-off in terms of attribute obfuscation and accuracy maximization. We believe our work takes an important step towards better understanding the trade-off between accuracy maximization and attribute obfuscation, and it also helps inspire the future design of attribute obfuscation algorithms with adversarial learning techniques.

\section*{Acknowledgements}
HZ and GG would like to acknowledge support from the DARPA XAI project, contract \#FA87501720152 and a Nvidia GPU grant. JC and YT would like to acknowledge support from NSF OAC 2002985 and NSF CNS 1943100. 

\section*{Broader Impact}
In the process of data collection and information sharing, the data might contain sensitive information that the users are unwilling to disclose. This poses severe challenges for regulations such as GDPR~\citep{gdpr} that aims to control the uses and purposes of the collected and shared data. Our work takes a step towards better understanding the trade-off therein and suggests a practical method to mitigate the potential information leakage in such high-stakes scenarios. That being said, the adversarial learning techniques might inevitably lead to degradation in target performance, and more work is needed to explore the best trade-off that could be achieved. 

\bibliography{reference}
\bibliographystyle{plainnat}

\newpage
\onecolumn
\appendix

\section*{Appendix}
In this appendix we provide the missing proofs of theorems and claims in our main paper. We also describe detailed experimental settings here. 

\section{Technical Tools}
In this section we list the lemmas and theorems used during our proof. 
\begin{lemma}[Theorem 2.2,~\citep{calabro2009exponential}]
    Let $H_2^{-1}(s)$ be the inverse binary entropy function for $s\in[0, 1]$, then $H_2^{-1}(s)\geq s / 2\lg (6/s)$.
\label{lemma:inventropy}
\end{lemma}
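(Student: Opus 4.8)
\textbf{Proof proposal for Lemma~\ref{lemma:inventropy}.}

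The statement to prove is that the inverse binary entropy function satisfies $H_2^{-1}(s) \geq s/(2\lg(6/s))$ for $s \in [0,1]$, where $H_2(p) = -p\lg p - (1-p)\lg(1-p)$ and $H_2^{-1}$ denotes the inverse on the branch $p \in [0,1/2]$. Since $H_2$ is increasing on $[0,1/2]$, the claimed inequality is equivalent to showing that $H_2\big(s/(2\lg(6/s))\big) \leq s$ for all $s$ in the relevant range (with care taken that the argument $s/(2\lg(6/s))$ indeed lies in $[0,1/2]$, which holds because $\lg(6/s) \geq \lg 6 > 1$ when $s \leq 1$). So the plan is to reduce everything to a one-variable inequality: set $p = s/(2\lg(6/s))$ and bound $H_2(p)$ from above.

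The key analytic tool is the standard upper bound on binary entropy $H_2(p) \leq p \lg(1/p) + p\lg e \leq p\lg(e/p)$, or even more simply the cruder bound $H_2(p) \leq p\lg(4/p)$ valid for $p \in [0,1/2]$ (this follows since $(1-p)\lg\frac{1}{1-p} \leq p\lg e \cdot \frac{1}{\ln 2}$-type estimates, or one can just verify $H_2(p)/p$ is decreasing and check the constant). First I would fix such a clean bound $H_2(p) \leq p\lg(c/p)$ with an explicit constant $c$ (I expect $c = 4$ or $c = e \approx 2.718$ to work). Then substituting $p = s/(2L)$ with $L \defeq \lg(6/s)$, we get
\begin{equation*}
H_2\!\left(\frac{s}{2L}\right) \leq \frac{s}{2L}\lg\!\left(\frac{2cL}{s}\right) = \frac{s}{2L}\Big(\lg(2c) + \lg(L) + \lg(1/s)\Big).
\end{equation*}
So it suffices to show $\lg(2c) + \lg L + \lg(1/s) \leq 2L = 2\lg(6/s) = 2\lg 6 + 2\lg(1/s)$, i.e. that $\lg(2c) + \lg L \leq 2\lg 6 + \lg(1/s)$. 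Since $\lg(1/s) = L - \lg 6$, this rearranges to $\lg L \leq L + (\lg 6 - \lg(2c)) = L + \lg(3/c)$, i.e. $\lg L - L \leq \lg(3/c)$.

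The remaining step is to verify $\lg L - L \leq \lg(3/c)$ for all admissible $L$, where $L = \lg(6/s)$ ranges over $[\lg 6, \infty)$ as $s$ ranges over $(0,1]$. Since $\lg L - L$ is decreasing for $L \geq \lg e = 1/\ln 2 \approx 1.443$, and $\lg 6 \approx 2.585 > \lg e$, the maximum over the admissible range is attained at $L = \lg 6$, giving $\lg L - L \leq \lg\lg 6 - \lg 6 = \lg\big((\lg 6)/6\big)$. So I need $\lg\big((\lg 6)/6\big) \leq \lg(3/c)$, i.e. $c \leq 18/\lg 6 \approx 6.96$. Both $c = 4$ and $c = e$ comfortably satisfy this, so the chain closes. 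I would also separately dispatch the boundary cases $s = 0$ (both sides $0$, interpreting the limit) and $s = 1$ (where $H_2^{-1}(1) = 1/2$ and the RHS is $1/(2\lg 6) \approx 0.193 \leq 1/2$), and double-check that the argument of $H_2$ stays in $[0,1/2]$ throughout so that applying $H_2^{-1}$ is legitimate. The main obstacle is purely bookkeeping: pinning down an entropy upper bound $H_2(p) \leq p\lg(c/p)$ with a constant $c$ small enough ($c \lesssim 7$) to make the final numeric comparison $\lg((\lg 6)/6) \leq \lg(3/c)$ go through — but since the classical bound gives $c = e$ with lots of room, I expect no real difficulty, and this matches the cited source (Theorem~2.2 of~\citep{calabro2009exponential}).
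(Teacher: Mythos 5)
The paper does not actually prove this lemma: it is imported verbatim as Theorem~2.2 of \citet{calabro2009exponential} and used as a black box in the proof of Theorem~\ref{thm:lower}, so there is no in-paper argument to compare against. Your blind proof is a correct, self-contained replacement. The reduction is sound: since $H_2$ is increasing on $[0,1/2]$ and $p = s/(2\lg(6/s)) \leq 1/(2\lg 6) < 1/2$, the claim is equivalent to $H_2(p) \leq s$. The bound $H_2(p) \leq p\lg(e/p)$ is the standard consequence of $-(1-p)\ln(1-p)\leq p$, and your bookkeeping checks out: the target inequality becomes $\lg(2e)+\lg L \leq \lg 6 + L$ with $L = \lg(6/s) \geq \lg 6$, and since $\lg L - L$ is decreasing for $L \geq \lg e$, its maximum on the admissible range is $\lg\bigl((\lg 6)/6\bigr)$, so one only needs $c \leq 18/\lg 6 \approx 6.96$, which $c=e$ (or $c=4$) satisfies with room to spare; the endpoint cases $s=0,1$ are handled correctly. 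The only blemish is the parenthetical justification you sketch for the cruder bound $H_2(p)\leq p\lg(4/p)$, which as written is garbled --- but it is superfluous, since the $c=e$ bound you also state carries the whole argument, and in any case $p\lg(4/p)\geq p\lg(e/p)$. Net effect: where the paper cites, you supply an elementary proof whose only ingredients are monotonicity of $H_2$ on $[0,1/2]$, the inequality $H_2(p)\leq p\lg(e/p)$, and a one-variable calculus check.
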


\begin{lemma}[\citet{lin1991divergence}]
\label{lemma:lin}
Let $\dist$ and $\dist'$ be two distributions, then $\jsd(\dist, \dist')\leq \frac{1}{2}\|\dist - \dist'\|_1$.
\end{lemma}

\begin{theorem}[Data processing inequality]
    Let $X\perp Y \mid Z$, then $I(X;Z)\geq I(X; Y)$.
\label{thm:dpi}
\end{theorem}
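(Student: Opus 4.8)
The plan is to obtain the inequality purely from the chain rule for mutual information: expand the joint information $I(X; Y, Z)$ in two different orders and then use the hypothesis $X \perp Y \mid Z$ to annihilate one of the conditional terms.

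First I would apply the chain rule two ways:
\[
I(X; Y, Z) = I(X; Z) + I(X; Y \mid Z), \qquad I(X; Y, Z) = I(X; Y) + I(X; Z \mid Y).
\]
Next I would note that $X \perp Y \mid Z$ is, by definition, the statement that $X$ and $Y$ are conditionally independent given $Z$, which is equivalent to $I(X; Y \mid Z) = 0$; indeed $I(X; Y \mid Z) = \Exp_Z\big[\kl(P_{X,Y\mid Z} \,\|\, P_{X\mid Z}\otimes P_{Y\mid Z})\big]$, and a KL divergence vanishes exactly when its two arguments coincide. Substituting this into the first expansion and equating the two right-hand sides gives
\[
I(X; Z) = I(X; Y) + I(X; Z \mid Y).
\]
Finally I would invoke the nonnegativity of conditional mutual information, $I(X; Z \mid Y) \geq 0$, to conclude $I(X; Z) \geq I(X; Y)$.

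I do not expect a genuine obstacle, as this is the textbook data-processing inequality; the only steps meriting care are the two background facts it rests on — the chain rule and the nonnegativity of conditional mutual information. For discrete random variables (all the paper needs, since it applies the result to a deterministic pushforward $Z = f(X)$ in Lemma~\ref{lemma:jsd}) both follow from the log-sum / Gibbs inequality; in full generality one would instead define mutual information as a supremum over finite partitions and appeal to convexity of KL divergence, but that refinement is unnecessary here. One should also double-check the direction of the conditional-independence hypothesis, so that it is the term $I(X; Y \mid Z)$, and not $I(X; Z \mid Y)$, that is set to zero.
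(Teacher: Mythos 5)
Your proof is correct: the two chain-rule expansions of $I(X;Y,Z)$, the identification of the hypothesis $X\perp Y\mid Z$ with $I(X;Y\mid Z)=0$, and nonnegativity of $I(X;Z\mid Y)$ give exactly $I(X;Z)\geq I(X;Y)$. The paper states this data processing inequality only as a background technical tool without proof, and your argument is the standard one that fills it in, so there is nothing to reconcile.
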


\section{Missing Proofs}
\privacy*
\begin{proof}
We first prove the first part of the proposition. By definition, $\priv(h_A) = 0$ iff $\Pr_\dist(h_A(X) = 1\mid A = 1) = \Pr_\dist(h_A(X) = 1 \mid A = 0)$, which is also equivalent to $h_A(X)\independent A$. It then follows that $h_A(X)\independent A \Leftrightarrow I(h_A(X);A) = 0$.

For the second part of the proposition, again, by definition of $\priv(h_A)$, it is clear to see that we either have $\Pr_\dist(h_A(X) = 1\mid A = 1) = 1$ and $\Pr_\dist(h_A(X) = 1\mid A = 0) = 0$, or $\Pr_\dist(h_A(X) = 1\mid A = 1) = 0$ and $\Pr_\dist(h_A(X) = 1\mid A = 0) = 1$. Hence we discuss by these two cases. For ease of notation, we omit the subscript $\dist$ from $\Pr_\dist$ when it is obvious from the context which probability distribution we are referring to.
\begin{enumerate}
    \item   If $\Pr(h(X) = 1\mid A = 1) = 1$ and $\Pr(h(X) = 1\mid A = 0) = 0$, then we know that:
        \begin{align*}
        \Pr(h_A(X)\neq A) &= \Pr(A = 0)\Pr(h_A(X) \neq A \mid A = 0) + \Pr(A = 1)\Pr(h_A(X) \neq A \mid A = 1) \\
        &= \Pr(A = 0)\Pr(h_A(X) = 1 \mid A = 0) + \Pr(A = 1)\Pr(h_A(X) = 0 \mid A = 1) \\
        &= \Pr(A = 0)\cdot 0 + \Pr(A = 1)\cdot 0 \\
        &= 0.
        \end{align*}
    \item   If $\Pr(h_A(X) = 1\mid A = 1) = 0$ and $\Pr(h_A(X) = 1\mid A = 0) = 1$, similarly, we have:
        \begin{align*}
        \Pr(h_A(X)\neq 1 - A) &= \Pr(A = 0)\Pr(h_A(X) \neq 1 - A \mid A = 0) + \Pr(A = 1)\Pr(h_A(X) \neq 1-A \mid A = 1) \\
        &= \Pr(A = 0)\Pr(h_A(X) = 0 \mid A = 0) + \Pr(A = 1)\Pr(h_A(X) = 1 \mid A = 1) \\
        &= \Pr(A = 0)\cdot 0 + \Pr(A = 1)\cdot 0 \\
        &= 0.
        \end{align*}
\end{enumerate}
Combining the above two parts completes the proof.
\end{proof}

\errors*
\begin{proof}
By definition, we have:
\begin{align*}
    1 - \priv(\HH_A) & \defeq 1 - \max_{h_A\in\HH_A}~ \priv(h_A) \\
    & = \min_{h_A\in \HH_A}~1 - \big|\Pr(h_A(X) = 1\mid A = 1) - \Pr(h_A(X) = 1\mid A = 0)\big| \\
    & = \min_{h_A\in \HH_A}~1 - \big(\Pr(h_A(X) = 1\mid A = 1) - \Pr(h_A(X) = 1\mid A = 0)\big) \\
    & = \min_{h\in \HH}~ \Pr(h_A(X) = 0\mid A = 1) + \Pr(h_A(X) = 1\mid A = 0),
\end{align*}
where the third equality holds due to the fact that $\max_{h_A\in\HH_A}\big|\Pr(h_A(X) = 1\mid A = 1) - \Pr(h_A(X) = 1\mid A = 0)\big| = \max_{h_A\in \HH_A}\big(\Pr(h_A(X) = 1\mid A = 1) - \Pr(h_A(X) = 1\mid A = 0)\big)$. To see this, for any specific $h_A$ such that the term inside the absolute value is negative, we can find $1-h_A\in\HH_A$ such that it becomes positive, due to the assumption that $\HH_A$ is symmetric.
\end{proof}

\mover*
\begin{proof}
First, by the celebrated data-processing inequality, $\forall h_A\in\HH_A$:
\begin{equation*}
    0 \leq I(h_A(f(X)); A) \leq I(f(X); A) = 0.
\end{equation*}
By Proposition~\ref{prop:priv}, this means that $\forall h_A\in\HH_A$, $\priv(h_A) = 0$, which further implies that $\priv(\HH_A \circ f) = 0$ by definition.
\end{proof}

\limitexp*
\begin{proof}
    Let $\dist^f$ be the induced (pushforward) distribution of $\dist$ under the map $f:\xxspace\to\zzspace$. By the definition of cross-entropy loss, we have:
    \begin{align*}
        \crossentropy_Y(h\circ f) &= -\Exp_\dist \left[\ind(Y = 0)\log(1-h(f(X))) + \ind(Y=1)\log(h(f(X)))\right] \\
        &= -\Exp_{\dist^f} \left[\ind(Y = 0)\log(1-h(Z)) + \ind(Y=1)\log(h(Z))\right] \\
        &= -\Exp_Z \Exp_{Y\mid Z} \left[\ind(Y = 0)\log(1-h(Z)) + \ind(Y=1)\log(h(Z))\right] \\
        &= -\Exp_Z  \left[\Pr(Y = 0\mid Z)\log(1-h(Z)) + \Pr(Y=1\mid Z)\log(h(Z))\right] \\
        &= \Exp_Z\left[\kl(\Pr(Y\mid Z)~||~ h(Z))\right] + H(Y\mid Z) \\
        &\geq H(Y\mid Z).
    \end{align*}
It is also clear from the above proof that the minimum value of the cross-entropy loss is achieved when $h(Z)$ equals the conditional probability $\Pr(Y=1\mid Z)$, i.e., $h^*(Z) = \Pr(Y = 1\mid Z = f(X))$.
\end{proof}

\guarantee*
\begin{proof}
    To prove this theorem, let $E$ be the binary random variable that takes value 1 iff $A\neq \predA$, i.e., $E = \ind(A\neq \predA)$. Now consider the joint entropy of $A, \predA$ and $E$. On one hand, we have:
    \begin{equation*}
        H(A, \predA, E) = H(A, \predA) + H(E\mid A, \predA) = H(A, \predA) + 0 = H(A\mid \predA) + H(\predA).
    \end{equation*}
    Note that the second equation holds because $E$ is a deterministic function of $A$ and $\predA$, that is, once $A$ and $\predA$ are known, $E$ is also known, hence $H(E\mid A, \predA)  = 0$. On the other hand, we can also decompose $H(A, \predA, E)$ as follows:
    \begin{equation*}
        H(A, \predA, E) = H(\predA) + H(A\mid \predA, E) + H(E\mid \predA).
    \end{equation*} 
    Combining the above two equalities yields
    \begin{equation*}
        H(A\mid \predA, E) + H(E\mid \predA) = H(A\mid \predA).
    \end{equation*}
    Furthermore, since conditioning cannot increase entropy, we have $H(E\mid \predA)\leq H(E)$, which further implies
    \begin{equation*}
        H(A\mid \predA) \leq H(E) + H(A\mid \predA, E).
    \end{equation*}
    Now consider $H(A\mid \predA, E)$. Since $A\in\{0, 1\}$, by definition of the conditional entropy, we have:
    \begin{align*}
        H(A\mid \predA, E) = \Pr(E = 1) H(A\mid \predA, E = 1) + \Pr(E = 0) H(A\mid \predA, E = 0) = 0 + 0 = 0.
    \end{align*}
    To lower bound $H(A\mid \predA)$, realize that 
    \begin{equation*}
        I(A; \predA) + H(A\mid \predA) = H(A) = I(A; Z) + H(A\mid Z).
    \end{equation*}
    Since $\predA$ is a randomized function of $Z$ such that $A\perp \predA \mid Z$, due to the celebrated data-processing inequality, we have $I(A;\predA)\leq I(A; Z)$, which implies
    \begin{equation*}
        H(A\mid \predA) \geq H(A\mid Z). 
    \end{equation*}
    Combine everything above, we have the following chain of inequalities hold:
    \begin{equation*}
        H(A\mid Z) \leq H(A\mid \predA) \leq H(E) + H(A\mid \predA, E) = H(E),
    \end{equation*}
    which implies
    \begin{equation*}
        \Pr_{\dist^{f^*}}(A\neq \predA) = \Pr_{\dist^{f^*}}(E = 1) \geq H_2^{-1}(H(A\mid Z)),
    \end{equation*}
    where $H_2^{-1}(\cdot)$ is the inverse function of the binary entropy $H(t)\defeq -t\log t - (1 - t)\log(1-t)$ when $t\in[0, 1]$. To conclude the proof, we apply Lemma~\ref{lemma:inventropy} to further lower bound the inverse binary entropy function by
    \begin{equation*}
        H_2^{-1}(H(A\mid Z)) \geq H(A\mid Z) / 2\lg(6 / H(A\mid Z)),
    \end{equation*}
    completing the proof.
\end{proof}

\dpi*
\begin{proof}
Let $B$ be a uniform random variable taking value in $\{0,1\}$ and let the random variable $Z_B$ with distribution $\dist^f_B$ (resp. $X_B$ with distribution $\dist_B$) be the mixture of $\dist^f_0$ and $\dist^f_1$ (resp. $\dist_0$ and $\dist_1$) according to $B$. It is easy to see that $\dist_B = (\dist_0 + \dist_1) / 2$, and we have:
\begin{align*}
    I(B; X_B) &= H(X_B) - H(X_B\mid B) \\
    &= -\sum \dist_B\log \dist_B + \frac{1}{2}\left( \sum \dist_0\log\dist_0 + \sum\dist_1\log\dist_1\right)\\
    &= -\frac{1}{2}\sum\dist_0\log\dist_B - \frac{1}{2}\sum\dist_1\log\dist_B + \frac{1}{2}\left( \sum \dist_0\log\dist_0 + \sum\dist_1\log\dist_1\right)\\
    &= \frac{1}{2}\sum\dist_0\log\frac{\dist_0}{\dist_B} + \frac{1}{2}\sum\dist_1\log\frac{\dist_1}{\dist_B}\\
    &= \frac{1}{2}\kl(\dist_0~\|~\dist_B) + \frac{1}{2}\kl(\dist_1~\|~\dist_B)\\
    &= \jsd(\dist_0, \dist_1).
\end{align*}
Similarly, we have:
\begin{equation*}
    \jsd(\dist_0^f, \dist_1^f) = I(B; Z_B).
\end{equation*}
Since $\dist_0^f$ (resp. $\dist_1^f$) is induced by $f$ from $\dist_0$ (resp. $\dist_1$), by linearity, $\dist_B^f$ is also induced by $f$ from $\dist_B$. Hence $Z_B = f(X_B)$ and the following Markov chain holds:
\begin{equation*}
    B\rightarrow X_B\rightarrow Z_B.
\end{equation*}
Apply the data processing inequality, we have
\begin{equation*}
    \jsd(\dist_0, \dist_1) = I(B;X_B)\geq I(B; Z_B) = \jsd(\dist_0^f, \dist_1^f).
\end{equation*}
Taking square root on both sides of the above inequality completes the proof.
\end{proof}

\linerror*
\begin{proof}
For $a\in\{0,1\}$, by definition of the JS distance:
\begin{align*}
    \djs^2(\dist_a^Y, \dist_a^{h\circ f}) &= \jsd(\dist_a^Y, \dist_a^{h\circ f}) \\
    & \leq \|\dist_a^Y - \dist_a^{h\circ f}\|_1/2 && (\text{Lemma~\ref{lemma:lin}})\\
    &= \left(|\Pr(Y = 0\mid A = a) - \Pr(h(f(X)) = 0\mid A = a)|\right. \\
    &\qquad + \left. |\Pr(Y = 1\mid A = a) - \Pr(h(f(X)) = 1\mid A = a)|\right)/2 \\
    &= |\Pr(Y = 1\mid A = a) - \Pr(h(f(X)) = 1\mid A = a)| \\
    &= |\Exp[Y\mid A = a] - \Exp[h(f(X))\mid A = a]| \\
    &\leq \Exp[|Y - h(f(X))|\mid A = a] \\
    &= \err_a(h\circ f),
\end{align*}
where the expectation is taken over the joint distribution of $X, Y$. Taking square root at both sides then completes the proof.
\end{proof}

\upper*
\begin{proof}
Before we delve into the details, we first give a high-level sketch of the main idea. The proof could be basically partitioned into two parts. In the first part, we will show that when $\HH_A$ contains all the measurable prediction functions,  $ \priv(\HH_A\circ f)$ could be used to upper bound $\jsd(\dist_0^f, \dist_1^f)$. The second part combines Lemma~\ref{lemma:relationship} and Lemma~\ref{lemma:jsd} to complete the proof. 

In this part we first show that $\jsd(\dist_0^f, \dist_1^f)\leq \priv(\HH\circ f)$:
\begin{align*}
    \jsd(\dist_0^f, \dist_1^f) &\leq \frac{1}{2}\|\dist_0^f - \dist_1^f\|_1 \\
    &= d_{\text{TV}}(\dist_0^f, \dist_1^f) \\
    &= \sup_{A\in \mathscr{B}}|\dist_0^f(A) - \dist_1^f(A)|,
\end{align*}
where $d_{\text{TV}}(\cdot, \cdot)$ denotes the total variation distance and $\mathscr{B}$ is the sigma algebra that contains all the measurable subsets of $\zzspace$. On the other hand, when $\HH_A$ contains all the measurable functions in $2^\zzspace$, we have:
\begin{align*}
    \priv(\HH_A\circ f) &= \max_{h_A\in\HH_A} |\Pr(h_A(Z) = 1\mid A = 0) - \Pr(h_A(Z) = 1\mid A = 1)| \\
    &= \max_{h_A\in \HH_A} |\dist_0(h_A^{-1}(1)) - \dist_1(h_A^{-1}(1))| \\
    &= \sup_{A\in\mathscr{B}}|\dist^f_0(A) - \dist^f_1(A)|,
\end{align*}
where the last equality follows from the fact that $\HH_A$ is complete and contains all the measurable functions. Combine the above two parts we immediately have $\jsd(\dist_0^f, \dist_1^f)\leq \priv(\HH_A\circ f)$.

Now using the key lemma, we have:
\begin{align*}
    \djs(\dist_0^Y, \dist_1^Y) &\leq \djs(\dist_0^Y, \dist_0^{h\circ f}) + \djs(\dist_0^f, \dist_1^f) + \djs(\dist_1^{h\circ f}, \dist_1^Y) \\
    &\leq \sqrt{\err_0(h\circ f)} + \sqrt{\priv(\HH_A\circ f)} + \sqrt{\err_1(h\circ f)} \\
    &= \sqrt{1 - \util_0(h\circ f)} + \sqrt{ \priv(\HH_A\circ f)} + \sqrt{1 - \util_1(h\circ f)} \\
    &\leq \sqrt{3(1 - \util_0(h\circ f) + 1 - \util_1(h\circ f) + \priv(\HH_A\circ f))} \\
    &= \sqrt{3\big(2 - (\util_0(h\circ f) + \util_1(h\circ f) - \priv(\HH_A\circ f))\big)}.
\end{align*}
Taking square at both sides and then rearrange the terms then completes the proof.
\end{proof}

\errorlower*
\begin{proof}
Similarly, using the key lemma, we have:
\begin{align*}
    \djs(\dist_0^Y, \dist_1^Y) &\leq \djs(\dist_0^Y, \dist_0^{h\circ f}) + \djs(\dist_0, \dist_1) + \djs(\dist_1^{h\circ f}, \dist_1^Y) \\
    &\leq \sqrt{\err_0(h\circ f)} + \sqrt{\priv(\HH_A\circ f)} + \sqrt{\err_1(h\circ f)}
\end{align*}
Under the assumption that $\priv(\HH_A\circ f) \leq \jsd(\dist_0^Y, \dist_1^Y)$, we have $\djs(\dist_0^Y, \dist_1^Y) \geq \sqrt{ \priv(\HH_A\circ f)}$, hence by AM-GM inequality:
\begin{equation*}
    \sqrt{2\big(\err_0(h\circ f) + \err_1(h\circ f)\big)} \geq \sqrt{\err_0(h\circ f) + \err_1(h\circ f)} \geq \djs(\dist_0^Y, \dist_1^Y) - \sqrt{\priv(\HH_A\circ f)}.
\end{equation*}
Taking square at both sides then completes the proof.
\end{proof}

\section{Detailed Experiments}
\label{sec:detailed}
In this section, we provide more details of the experiments. First we provide the details of different existing methods we evaluate. Then we elaborate more dataset description, model architecture and training parameters in different experiments.

\subsection{Details on Methods}

We provide a detailed description of each method here:

1). Privacy Partial Least Squares (PPLS): It learns $n \times X_d$ matrix for the feature transformation. The matrix is learned by maximizing the covariance of the learned representation and target attribute while minimizing the covariance of the learned representation and sensitive attribute.

2). Privacy Linear Discriminant Analysis (PLDA): It learns $n \times X_d$ matrix for the feature transformation. The matrix is learned by maximizing the Fisher’s linear discriminability of the learned representation and target attribute while minimizing the Fisher’s linear discriminability of the learned representation and sensitive attribute.

3). Minimax filter with alternative update (ALT-UP): The representation is learned via optimizing Equation~\ref{equ:entropy} in an alternative way, first we update the parameters of the feature transformation module and the target attribute classifier, and then accordingly update the sensitive attribute classifier. 

4).  Maximum Entropy Adversarial Representation Learning (MAX-ENT): The objective equation is the slightly different from ALT-UP. The latter term contains additional entropy term to maximize unpredictability of the sensitive attribute.

5). Gradient Reversal Layer (GRL): The objective equation is the same as ALT-UP, and we train the feature transformation module by adding a gradient reversal layer between the feature transformation module and the sensitive attribute classifier. 

6). Principal Component Analysis (PCA): It generates a $n \times X_d$ matrix for the feature transformation where the rows of the matrix are the $n$ largest eigenvectors of the input dataset $X$.

7). Normal Training (NORM-TRAIN): It is equivalent to normal training by setting $\lambda=0$ in Equation~\ref{equ:entropy}.

8). Local Differential Privacy (LDP): Standard Laplace mechanism of local differential privacy, where the noise is added to the raw representation for erasing the information of the sensitive attribute.

9). Differentially private SGD (DPSGD): It is one of the state-of-the-art differential privacy methods on deep learning. It adds Gaussian noise to the gradients when training the model.

\subsection{Details on UCI Adult Dataset Evaluation}

UCI Adult dataset is a benchmark machine learning dataset for income prediction. Each data record contains 14 categorical or numerical attributes, such as occupation, education and gender, to predict whether individual annual income exceeds \$50K/year. The dataset is divided into training set (24130 examples), validation (6032 examples), and test set (15060 examples). We choose gender, age, and education as the sensitive attributes, respectively. 

\begin{table}[htb]
    \centering
    \begin{minipage}[c]{.49\linewidth}
    \centering
    \captionsetup{justification=centering}
    \caption{Data distribution of income ($Y$)\\ and gender ($A$) in UCI Adult dataset.}
    \begin{tabular}{ccc}\toprule
         &  $Y = 0$ & $Y = 1$\\\midrule
    $A = 0$ & 20988 & 9539 \\
    $A = 1$ & 13026 & 1669 \\\bottomrule
    \label{tab:uci-income-gender}
    \end{tabular}
    \end{minipage} %
    \begin{minipage}[c]{.49\linewidth}
    \centering
    \captionsetup{justification=centering}
    \caption{Data distribution of income ($Y$)\\ and age ($A$) in UCI Adult dataset.}
    \begin{tabular}{ccc}\toprule
         &  $Y = 0$ & $Y = 1$\\\midrule
    $A = 0$ & 18042 & 2473 \\
    $A = 1$ & 15972 & 8735 \\\bottomrule
    \label{tab:uci-income-age}
    \end{tabular}
    \end{minipage} %
    \begin{minipage}[c]{.49\linewidth}
    \centering
    \captionsetup{justification=centering}
    \caption{Data distribution of income ($Y$)\\ and education ($A$) in UCI Adult dataset.}
    \begin{tabular}{ccc}\toprule
         &  $Y = 0$ & $Y = 1$\\\midrule
    $A = 0$ & 20447 & 4248 \\
    $A = 1$ & 13567 & 6960 \\\bottomrule
    \label{tab:uci-income-education}
    \end{tabular}
    \end{minipage} %
\end{table}

We process each sensitive attribute as binary label for each experiment: for age label, 0 if the person is no greater than 35 years old and 1 otherwise; for education label, 0 if the person has not entered college or receive higher education than college, and 1 otherwise. In the mean time, we also remove corresponding sensitive attribute from the input, so the dimension of input data for each experiment is different. The input dimensions for income-gender experiment, income-age experiment, and income-education experiment are 113, 104 and 99, respectively. Table~\ref{tab:uci-income-gender}, Table~\ref{tab:uci-income-age} and Table~\ref{tab:uci-income-education} summarize the data distribution of UCI Adult dataset for protecting different sensitive attributes.

We use the two-layer ReLU-based neural net for $f$ and one-layer neural net for $h$. The output dimensions of $f$ are 64. We train all methods using SGD with the initial learning late 0.001 and momentum 0.9 for 40 epochs. In the DP-SGD experiment, we set the noise multiplier as 0.45 and 4.0 for small noise and large noise, respectively, and set the clipping norm as 1.0. $(\epsilon, \delta)$ for DPSGD small noise and DPSGD large noise are $(33.7, 10^{-5})$ and $(0.572, 10^{-5})$, respectively. Among all methods, we report the one achieving the best performance on the target task in the validation set. We run the experiments for ten random seeds and compute the average.

\subsection{Details on UTKFace Dataset Evaluation}

UTKFace dataset is a large scale face dataset with annotations of age (range from 0 to 116 years old), gender (male and female), and ethnicity (White, Black, Asian, Indian, and Others). It contains 23,705 $64 \times 64$ aligned and cropped RGB face images and we split the dataset into training set (15171 examples), validation set (3793 examples) and test set (4741 examples), respectively. We further process age label and ethnicity label as binary labels: 0 if the person is not greater than 35 years old for age label (is white for ethnicity label), and 1 if the the person is greater than 35 years old for age label (is non-white for ethnicity label). Table~\ref{tab:utk-gender-race} and Table~\ref{tab:utk-gender-age} summarize the data distribution of UTKFace dataset for protecting different sensitive attributes.

\begin{table}[htb]
    \begin{minipage}{.5\linewidth}
    \centering
    \captionsetup{justification=centering}
    \caption{Data distribution of gender ($Y$)\\ and race ($A$) in UTKFace dataset.}
    \begin{tabular}{ccc}\toprule
         &  $Y = 0$ & $Y = 1$\\\midrule
    $A = 0$ & 5477 & 4601 \\
    $A = 1$ & 6914 & 6713 \\\bottomrule
    \label{tab:utk-gender-race}
    \end{tabular}
    \end{minipage} %
    \begin{minipage}{.5\linewidth}
    \centering
    \captionsetup{justification=centering}
    \caption{Data distribution of gender ($Y$)\\ and age ($A$) in UTKFace dataset.}
    \begin{tabular}{ccc}\toprule
         &  $Y = 0$ & $Y = 1$\\\midrule
    $A = 0$ & 6889 & 8218 \\
    $A = 1$ & 5502 & 3096 \\\bottomrule
    \label{tab:utk-gender-age}
    \end{tabular}
    \end{minipage} %
\end{table}

Since NORM-TRAIN, ALT-UP, GRL and DP can directly enjoy the benefits of using the state-of-the-art neural network architecture as feature extraction module, so we use the feature extraction module of Wide Residual Network~\citep{zagoruyko2016wide} for the (non-linear) feature transformation module, while PPLS, PLDA, and PCA learn $12288 \times 2048$ matrix filter for $f$. We train all methods using SGD with the initial learning late 0.01 and momentum 0.9 for 30 epochs. The learning rate is decayed by a factor of 0.1 for every 10 epochs. In the DP-SGD experiment, we set the noise multiplier as 0.45 and 1.0 for small noise and large noise, respectively, and set the clipping norm as $1.0$. 
$(\epsilon, \delta)$ for DPSGD small noise and DPSGD large noise are $(25.7, 10^{-5})$ and $(2.7, 10^{-5})$, respectively. Among all methods, we report the one achieving the best performance on the target task in the validation set. We run the experiments for ten times and compute the average. 

\section{Additional Experimental Results}

In this section, we present additional experimental results to gain more insights into how the trade-off parameter $\lambda$ affects the performances of different adversarial presentation learning methods. We varies the values of $\lambda$ and report the accuracies of both tasks using the Adult dataset when the sensitive attribute is gender. Note that all hyperparameter settings follow the previous experiments. The results are shown in Table~\ref{tab:lambda}. We can see that the overall trend is that when $\lambda$ increases, the accuracies for both tasks decrease. Compared to ALT-UP and GRL, the training of MAX-ENT is unstable when $\lambda$ is large.

\begin{table}[!ht]
\centering
\caption{Performances of different adversarial representation learning methods when $\lambda$ changes.}
\resizebox{\linewidth}{!}{
\begin{tabular}{l|llcccc} 
\toprule
\multirow{9}{*}{Gender} & \multirow{3}{*}{ALT-UP} & $\lambda$ & 0 & 0.1 & 1 & 5 \\ 
\cline{3-7}
 &  & TAR. ACC. & 0.8501$\pm$0.0010 & 0.8496$\pm$0.0013 & 0.8483$\pm$0.0010 & 0.8456$\pm$0.0014 \\ 

 &  & SEN. ACC. & 0.7408$\pm$0.0096 & 0.6682$\pm$0.0026 & 0.6627$\pm$0.0021 & 0.6737$\pm$0.0005 \\ 
\cline{2-7}
 & \multirow{3}{*}{GRL} & $\lambda$ & 0 & 0.1 & 1 & 5 \\ 
\cline{3-7}
 &  & TAR. ACC. & 0.8501$\pm$0.0010 & 0.8465$\pm$0.0017 & 0.8449$\pm$0.0010 & 0.8387$\pm$0.0019 \\ 
 &  & SEN. ACC. & 0.7408$\pm$0.0096 & 0.6677$\pm$0.0060 & 0.6677$\pm$0.0039 & 0.6764$\pm$0.0054 \\ 
\cline{2-7}
 & \multirow{3}{*}{MAX-ENT} & $\lambda$ & 0 & 0.1 & 1 & 5 \\ 
\cline{3-7}
 &  & TAR. ACC. & 0.8501$\pm$0.0010 & 0.8450$\pm$0.0038 & 0.8411$\pm$0.0055 & 0.7891$\pm$0.0449 \\ 
 &  & SEN. ACC. & 0.7408$\pm$0.0096 & 0.6928$\pm$0.0084 & 0.6897$\pm$0.0038 & 0.5695$\pm$0.1679 \\ 
\midrule
\midrule
\multirow{9}{*}{Age} & \multirow{3}{*}{ALT-UP} & $\lambda$ & 0 & 0.1 & 1 & 5 \\ 
\cline{3-7}
 &  & TAR. ACC. & 0.8467$\pm$0.0011 & 0.8468$\pm$0.0009 & 0.8472$\pm$0.0011 & 0.8451$\pm$0.0008 \\ 
 &  & SEN. ACC. & 0.7190$\pm$0.010 & 0.6516$\pm$0.0038 & 0.5422$\pm$0.0133 & 0.5573$\pm$0.0438 \\ 
\cline{2-7}
 & \multirow{3}{*}{GRL} & $\lambda$ & 0 & 0.1 & 1 & 5 \\ 
\cline{3-7}
 &  & TAR. ACC. & 0.8467$\pm$0.0011 & 0.8444$\pm$0.0009 & 0.8445$\pm$0.0012 & 0.8422$\pm$0.0013 \\ 
 &  & SEN. ACC. & 0.7190$\pm$0.010 & 0.6486$\pm$0.0067 & 0.5361$\pm$0.0134 & 0.5381$\pm$0.0133 \\ 
\cline{2-7}
 & \multirow{3}{*}{MAX-ENT} & $\lambda$ & 0 & 0.1 & 1 & 5 \\ 
\cline{3-7}
 &  & TAR. ACC. & 0.8467$\pm$0.0011 & 0.8379$\pm$0.0056 & 0.8194$\pm$0.0345 & 0.7795$\pm$0.0406 \\ 
 &  & SEN. ACC. & 0.7190$\pm$0.0100 & 0.6633$\pm$0.0669 & 0.6201$\pm$0.0820 & 0.5400$\pm$0.0316 \\ 
\midrule
\midrule
\multirow{9}{*}{Education} & \multirow{3}{*}{ALT-UP} & $\lambda$ & 0 & 0.1 & 1 & 5 \\ 
\cline{3-7}
 &  & TAR. ACC. & 0.8494$\pm$0.0008 & 0.8498$\pm$0.0004 & 0.8497$\pm$0.0012 & 0.8494$\pm$0.0015 \\ 
 &  & SEN. ACC. & 0.7088$\pm$0.0080 & 0.6062$\pm$0.0108 & 0.6044$\pm$0.0145 & 0.5462$\pm$0.0358 \\ 
 \cline{3-7}
 & \multirow{3}{*}{GRL} & $\lambda$ & 0 & 0.1 & 1 & 5 \\ 
\cline{3-7}
 &  & TAR. ACC. & 0.8494$\pm$0.0008 & 0.8525$\pm$0.0010 & 0.8518$\pm$0.0007 & 0.8500$\pm$0.0013 \\ 
 &  & SEN. ACC. & 0.7088$\pm$0.0080 & 0.6082$\pm$0.0119 & 0.6015$\pm$0.0154 & 0.5528$\pm$0.0260 \\ 
\cline{2-7}
 & \multirow{3}{*}{MAX-ENT} & $\lambda$ & 0 & 0.1 & 1 & 5 \\ 
\cline{3-7}
 &  & TAR. ACC. & 0.8494$\pm$0.0008 & 0.8365$\pm$0.0033 & 0.8253$\pm$0.0376 & 0.8087$\pm$0.0468 \\ 
 &  & SEN. ACC. & 0.7088$\pm$0.0080 & 0.5790$\pm$0.0383 & 0.5484$\pm$0.0001 & 0.5386$\pm$0.0305 \\
\bottomrule
\end{tabular}
}
\label{tab:lambda}
\end{table}

\end{document}